\theoremstyle{remark}
\newtheorem{definition}{Definition}
\newtheorem{theorem}{Theorem}[section]
\ifcvprfinal\pagestyle{empty}\fi
\begin{document}

%%%%%%%%% TITLE
\title{FoldingNet: Point Cloud Auto-encoder via Deep Grid Deformation}

\author{
	{Yaoqing Yang\footnotemark[2]{}}\\
		{\tt\small yyaoqing@andrew.cmu.edu}
	\and
	{Chen Feng\footnotemark[3]{}}\\
		{\tt\small cfeng@merl.com}
	\and
	{Yiru Shen\footnotemark[4]{}}\\
		{\tt\small yirus@g.clemson.edu}
	\and
	{Dong Tian\footnotemark[3]{}}\\
		{\tt\small tian@merl.com}
	\and
	\normalsize{\textsuperscript{$\dagger$}Carnegie Mellon University \quad \textsuperscript{$\ddagger$}Mitsubishi Electric Research Laboratories (MERL) \quad \textsuperscript{$\mathsection$}Clemson University}
}

\maketitle
\thispagestyle{empty}

%%%%%%%%% ABSTRACT
\begin{abstract}
	\vspace{-2mm}
   Recent deep networks that directly handle points in a point set, e.g., PointNet, have been state-of-the-art for supervised learning tasks on point clouds such as classification and segmentation. In this work, a novel end-to-end deep auto-encoder is proposed to address unsupervised learning challenges on point clouds. On the encoder side, a graph-based enhancement is enforced to promote local structures on top of PointNet. Then, a novel \emph{folding}-based decoder deforms a canonical 2D grid onto the underlying 3D object surface of a point cloud, achieving low reconstruction errors even for objects with delicate structures. The proposed decoder only uses about 7\% parameters of a decoder with fully-connected neural networks, yet leads to a more discriminative representation that achieves higher linear SVM classification accuracy than the benchmark. In addition, the proposed decoder structure is shown, in theory, to be a generic architecture that is able to reconstruct an arbitrary point cloud from a 2D grid. Our code is available at \url{http://www.merl.com/research/license#FoldingNet}
\end{abstract}
\vspace{-4mm}
%%%%%%%%% BODY TEXT

\section{Introduction}
\vspace{-2mm}
3D point cloud processing and understanding are usually deemed more challenging than 2D images mainly due to a fact that point cloud samples live on an irregular structure while 2D image samples (pixels) rely on a 2D grid in the image plane with a regular spacing. Point cloud geometry is typically represented by a set of sparse 3D points. Such a data format makes it difficult to apply traditional deep learning framework. E.g. for each sample, traditional convolutional neural network (CNN) requires its neighboring samples to appear at some fixed spatial orientations and distances so as to facilitate the convolution. Unfortunately, point cloud samples typically do not follow such constraints. One way to alleviate the problem is to voxelize a point cloud to mimic the image representation and then to operate on voxels. The downside is that voxelization has to either sacrifice the representation accuracy or incurs huge redundancies, that may pose an unnecessary cost in the subsequent processing, either at a compromised performance or an rapidly increased processing complexity. Related prior-arts will be reviewed in Section~\ref{sec:related_works}.
\begin{table}
\begin{tabular}{cccc}
\hline
Input&2D grid&1st folding&2nd folding\\
\hline
\includegraphics[width=.2\columnwidth,height=.2\columnwidth,keepaspectratio,valign=m,margin=.0cm .05cm]{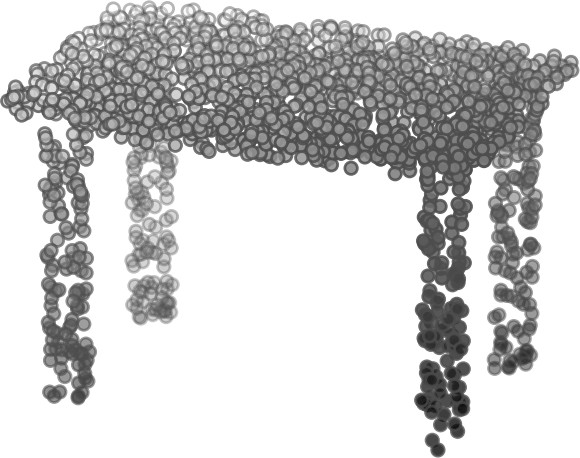}&
&
\includegraphics[width=.2\columnwidth,height=.18\columnwidth,keepaspectratio,valign=m,margin=.0cm .05cm]{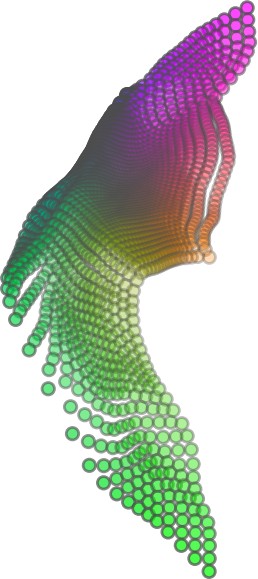}&
\includegraphics[width=.2\columnwidth,height=.2\columnwidth,keepaspectratio,valign=m,margin=.0cm .05cm]{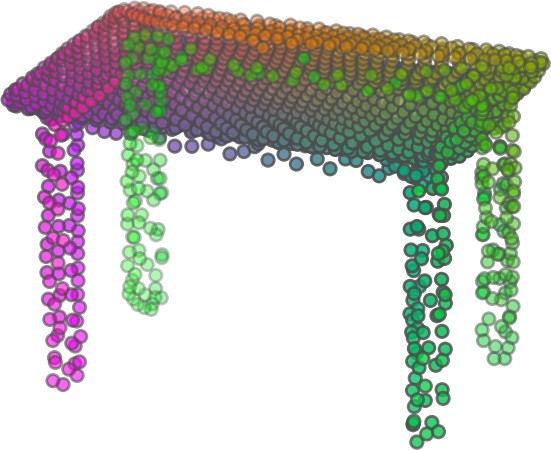}\\
\includegraphics[width=.2\columnwidth,height=.2\columnwidth,keepaspectratio,valign=m,margin=.0cm .05cm]{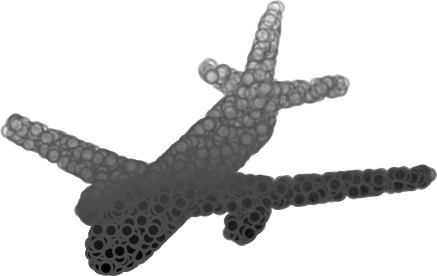}&
&
\includegraphics[width=.2\columnwidth,height=.15\columnwidth,keepaspectratio,valign=m,margin=.0cm .05cm]{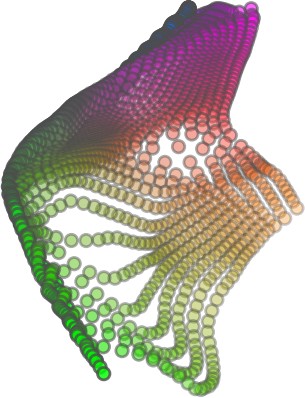}&
\includegraphics[width=.2\columnwidth,height=.2\columnwidth,keepaspectratio,valign=m,margin=.0cm .05cm]{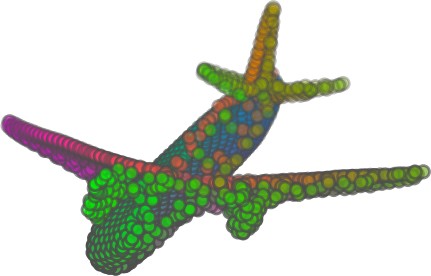}\\
\includegraphics[width=.2\columnwidth,height=.2\columnwidth,keepaspectratio,valign=m,margin=.0cm .05cm]{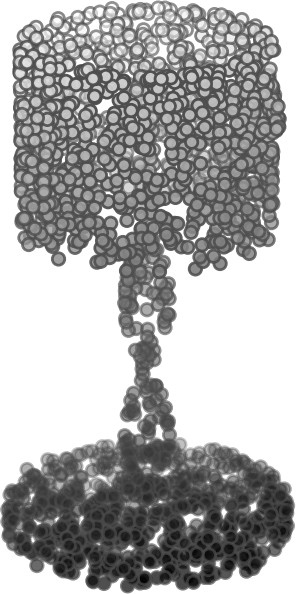}&
\includegraphics[width=.2\columnwidth,height=.2\columnwidth,keepaspectratio,valign=m,margin=.0cm .05cm]{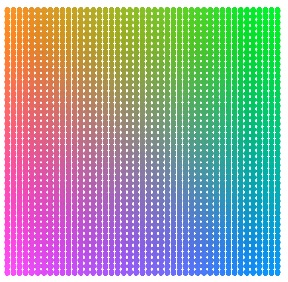}&
\includegraphics[width=.2\columnwidth,height=.2\columnwidth,keepaspectratio,valign=m,margin=.0cm .05cm]{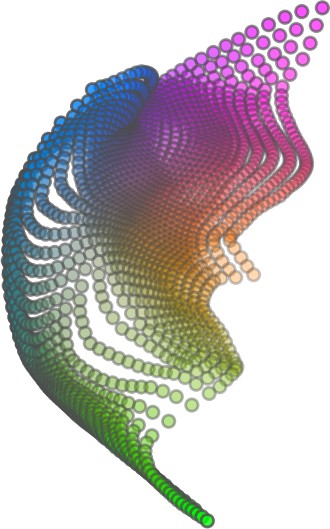}&
\includegraphics[width=.2\columnwidth,height=.2\columnwidth,keepaspectratio,valign=m,margin=.0cm .05cm]{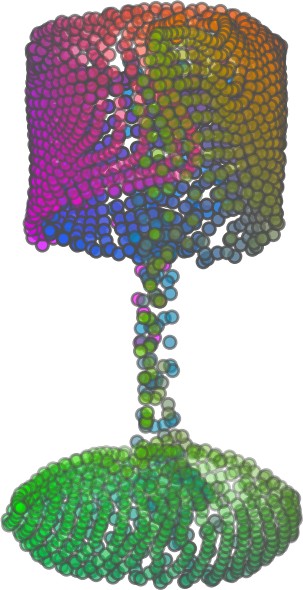}\\
\includegraphics[width=.2\columnwidth,height=.2\columnwidth,keepaspectratio,valign=m,margin=.0cm .05cm]{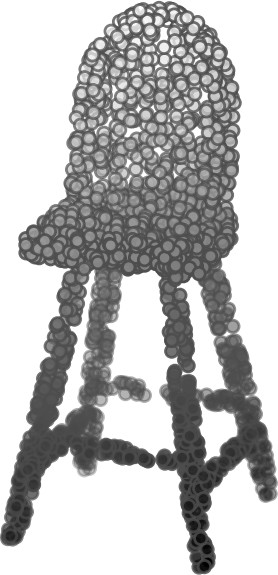}&
&
\includegraphics[width=.2\columnwidth,height=.2\columnwidth,keepaspectratio,valign=m,margin=.0cm .05cm]{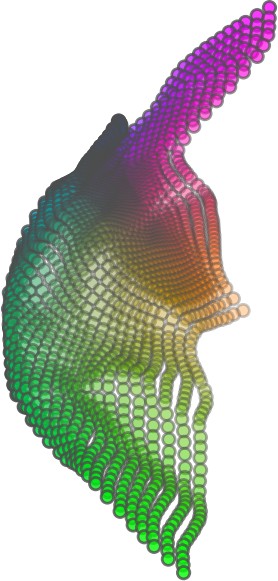}&
\includegraphics[width=.2\columnwidth,height=.2\columnwidth,keepaspectratio,valign=m,margin=.0cm .05cm]{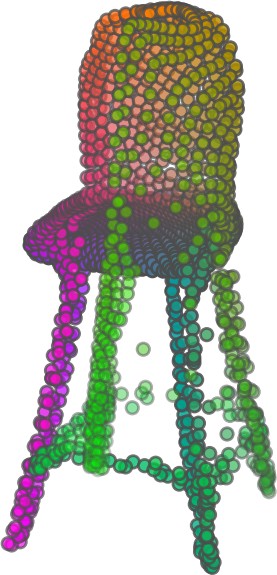}\\
\includegraphics[width=.2\columnwidth,height=.2\columnwidth,keepaspectratio,valign=m,margin=.0cm .05cm]{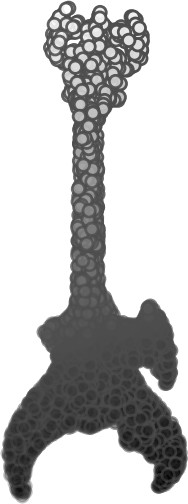}&
&
\includegraphics[width=.2\columnwidth,height=.2\columnwidth,keepaspectratio,valign=m,margin=.0cm .05cm]{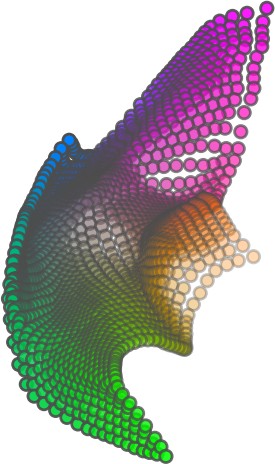}&
\includegraphics[width=.2\columnwidth,height=.2\columnwidth,keepaspectratio,valign=m,margin=.0cm .05cm]{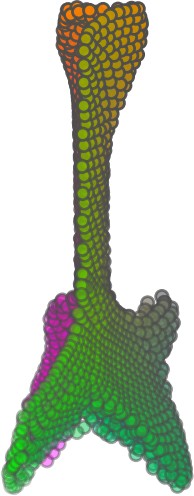}\\
\hline
\end{tabular}
\medskip
\caption{\textbf{Illustration of the two-step-folding decoding}. Column one contains the original point cloud samples from the ShapeNet dataset~\cite{wu20153d}. Column two illustrates the 2D grid points to be folded during decoding. Column three contains the output after one folding operation. Column four contains the output after two folding operations. This output is also the reconstructed point cloud. We use a color gradient to illustrate the correspondence between the 2D grid in column two and the reconstructed point clouds after folding operations in the last two columns. Best viewed in color.\vspace{-3mm}}\label{table:folding}
\end{table}

\begin{figure*}
  \centering
  \includegraphics[width=0.95\textwidth]{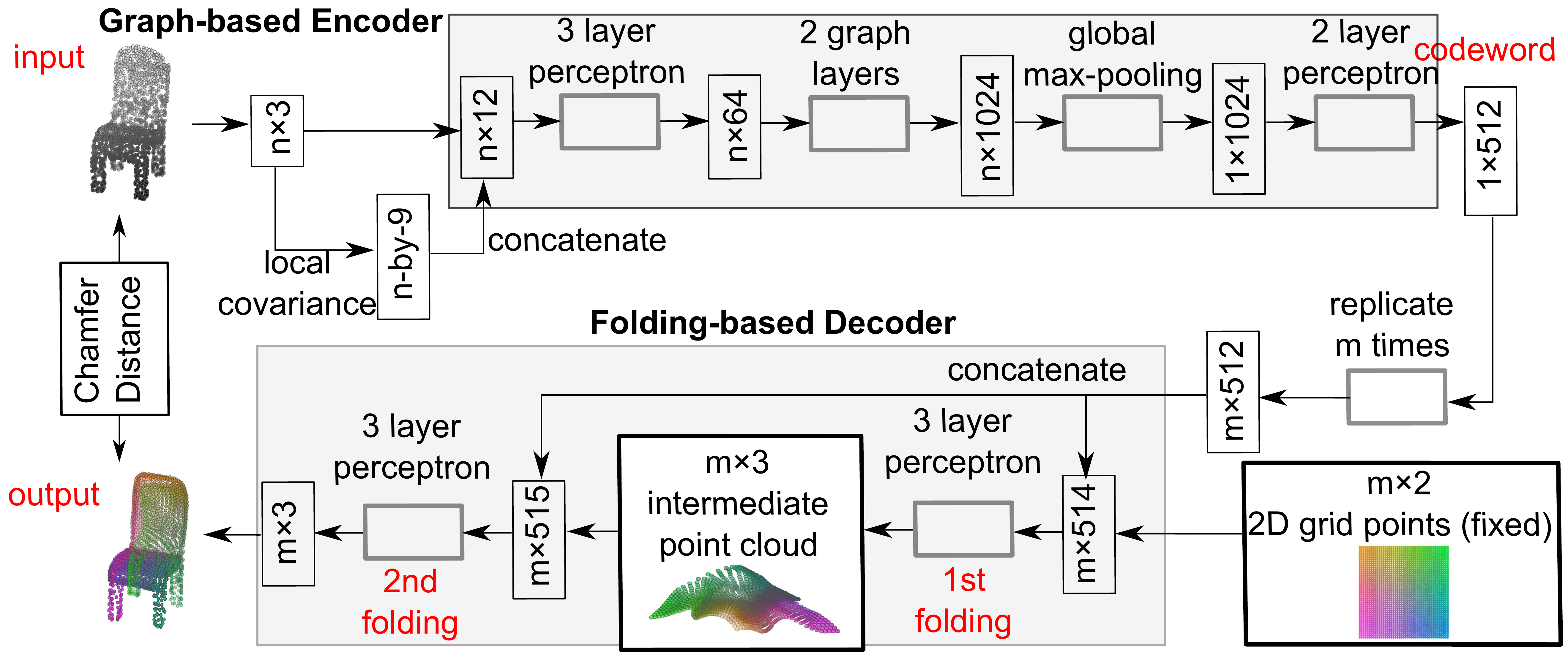}\\
  \caption{\textbf{FoldingNet Architecture}. The graph-layers are the graph-based max-pooling layers mentioned in \eqref{eqn:graph_layer} in Section~\ref{sec:enc}. The 1st and the 2nd folding are both implemented by concatenating the codeword to the feature vectors followed by a 3-layer perceptron. Each perceptron independently applies to the feature vector of a single point as in~\cite{qi2016pointnet}, i.e., applies to the rows of the $m$-by-$k$ matrix. }\vspace{-3mm} \label{fig:AE}
\end{figure*}

In this work, we focus on the emerging field of unsupervised learning for point clouds. We propose an auto-encoder (AE) that is referenced as FoldingNet. The output from the bottleneck layer in the auto-encoder is called a codeword that can be used as a high-dimensional embedding of an input point cloud. We are going to show that a 2D grid structure is not only a sampling structure for imaging, but can indeed be used to construct a point cloud through the proposed \emph{folding} operation. This is based on the observation that the 3D point clouds of our interest are obtained from object surfaces: either discretized from boundary representations in CAD/computer graphics, or sampled from line-of-sight sensors like LIDAR. Intuitively, any 3D object surface could be transformed to a 2D plane through certain operations like cutting, squeezing, and stretching. The inverse procedure is to glue those 2D point samples back onto an object surface via certain folding operations, which are initialized as 2D grid samples. As illustrated in Table~\ref{table:folding}, to reconstruct a point cloud, successive folding operations are joined to reproduce the surface structure. The points are colorized to show the correspondence between the initial 2D grid samples and the reconstructed 3D point samples. Using the folding-based method, the challenges from the irregular structure of point clouds are well addressed by directly introducing such an implicit 2D grid constraint in the decoder, which avoids the costly 3D voxelization in other works~\cite{wu2016learning}. It will be demonstrated later that the folding operations can build an arbitrary surface provided a proper codeword. Notice that when data are from volumetric format instead of 2D surfaces, a 3D grid may perform better.

Despite being strongly expressive in reconstructing point clouds, the folding operation is simple: it is started by augmenting the 2D grid points with the codeword obtained from the encoder, which is then processed through a 3-layer perceptron. The proposed decoder is simply a concatenation of two folding operations. This design makes the proposed decoder much smaller in parameter size than the fully-connected decoder proposed recently in~\cite{achlioptas2017representation}. In Section~\ref{exp:compare_FC}, we show that the number of parameters of our folding-based decoder is about 7\% of the fully connected decoder in~\cite{achlioptas2017representation}. Although the proposed decoder has a simple structure, we theoretically show in Theorem~\ref{thm:1} that this folding-based structure is universal in that one folding operation that uses only a 2-layer perceptron can already reproduce arbitrary point-cloud structure. Therefore, it is not surprising that our FoldingNet auto-encoder exploiting two consecutive folding operations can produce elaborate structures.

To show the efficiency of FoldingNet auto-encoder for unsupervised representation learning, we follow the experimental settings in~\cite{achlioptas2017representation} and test the transfer classification accuracy from ShapeNet dataset \cite{chang2015shapenet} to ModelNet dataset~\cite{wu20153d}. The FoldingNet auto-encoder is trained using ShapeNet dataset, and tested out by extracting codewords from ModelNet dataset. Then, we train a linear SVM classifier to test the discrimination effectiveness of the extracted codewords. The transfer classification accuracy is 88.4\% on the ModelNet dataset with 40 shape categories. This classification accuracy is even close to the state-of-the-art supervised training result~\cite{qi2016pointnet}. To achieve the best classification performance and least reconstruction loss, we use a graph-based encoder structure that is different from~\cite{qi2016pointnet}. This graph-based encoder is based on the idea of local feature pooling operations and is able to retrieve and propagate local structural information along the graph structure.

To intuitively interpret our network design: we want to impose a ``virtual force'' to deform/cut/stretch a 2D grid lattice onto a 3D object surface, and such a deformation force should be influenced or regulated by interconnections induced by the lattice neighborhood. 
Since the intermediate folding steps in the decoder and the training process can be illustrated by reconstructed points, the gradual change of the folding forces can be visualized.

Now we summarize our contributions in this work:
\begin{itemize}[noitemsep,nolistsep]
\item We train an end-to-end deep auto-encoder that consumes unordered point clouds directly.
\item We propose a new decoding operation called folding and theoretically show it is universal in point cloud reconstruction, while providing orders to reconstructed points as a unique byproduct than other methods.
\item We show by experiments on major datasets that folding can achieve higher classification accuracy than other unsupervised methods.
\end{itemize}

%-------------------------------------------------------------------------
\subsection{Related works}\label{sec:related_works}

Applications of learning on point clouds include shape completion and recognition \cite{wu20153d}, unmanned autonomous vehicles \cite{maturana20153d}, 3D object detection, recognition and classification \cite{song2016deep, socher2012convolutional,pang2015fast,chen20153d,qi2016pointnet,li2016fpnn,wahl2003surflet}, contour detection \cite{hackel2016contour}, layout inference \cite{geiger2015joint}, scene labeling \cite{lai2014unsupervised}, category discovery \cite{zhang2013unsupervised}, point classification, dense labeling and segmentation \cite{kim20133d,maturana2015voxnet,wang2013label,christoph2014object,dohan2015learning,hackel2016fast,qi2016pointnet,boulch2017unstructured, yi2016scalable,huang2016point,wang20173d},

Most deep neural networks designed for 3D point clouds are based on the idea of partitioning the 3D space into regular voxels and extending 2D CNNs to voxels, such as \cite{maturana2015voxnet,dai2017scannet,brock2016generative}, including the the work on 3D generative adversarial network \cite{wu2016learning}. The main problem of voxel-based networks is the fast growth of neural-network size with the increasing spatial resolution. Some other options include
octree-based \cite{riegler2016octnet} and kd-tree-based \cite{klokov2017escape} neural networks. Recently, it is shown that neural networks based on purely 3D point representations \cite{qi2016pointnet,ravanbakhsh2016deep,achlioptas2017representation,qi2017pointnet++} work quite efficiently for point clouds. The point-based neural networks can reduce the overhead of converting point clouds into other data formats (such as octrees and voxels), and in the meantime avoid the information loss due to the conversion.

The only work that we are aware of on end-to-end deep auto-encoder that directly handles point clouds is \cite{achlioptas2017representation}. The AE designed in \cite{achlioptas2017representation} is for the purpose of extracting features for generative networks. To encode, it sorts the 3D points using the lexicographic order and applies a 1D CNN on the point sequence. To decode, it applies a three-layer fully connected network. This simple structure turns out to outperform all existing unsupervised works on representation extraction of point clouds in terms of the transfer classification accuracy from the ShapeNet dataset to the ModelNet dataset \cite{achlioptas2017representation}. Our method, which has a graph-based encoder and a folding-based decoder, outperforms this method in transfer classification accuracy on the ModelNet40 dataset \cite{achlioptas2017representation}. Moreover, compared to \cite{achlioptas2017representation}, our AE design is more interpretable: the encoder learns the local shape information and combines information by max-pooling on a nearest-neighbor graph, and the decoder learns a ``\emph{force}'' to fold a two-dimensional grid twice in order to warp the grid into the shape of the point cloud, using the information obtained by the encoder. Another closely related work reconstructs a point set from a 2D image \cite{fan2016point}. Although the deconvolution network in \cite{fan2016point} requires a 2D image as side information, we find it useful as another implementation of our folding operation. We compare FoldingNet with the deconvolution-based folding and show that FoldingNet performs slightly better in reconstruction error with fewer parameters (see Supplementary Section~\ref{sec:devonvolution}).

It is hard for purely point-based neural networks to extract local neighborhood structure around points, i.e., features of neighboring points instead of individual ones. Some attempts for this are made in \cite{qi2017pointnet++,achlioptas2017representation}. In this work, we exploit local neighborhood features using a graph-based framework. Deep learning on graph-structured data is not a new idea. There are tremendous amount of works on applying deep learning onto irregular data such as graphs and point sets \cite{ravanbakhsh2016deep,henaff2015deep,duvenaud2015convolutional,vialatte2016generalizing,bruna2013spectral,masci2015geodesic,monti2016geometric,bronstein2017geometric,defferrard2016convolutional,niepert2016learning,edwards2016graph,kipf2016semi,hechtlinger2017generalization,levie2017cayleynets,atwood2016diffusion,simonovsky2017dynamic,zaheer2017deep}. Although using graphs as a processing framework for deep learning on point clouds is a natural idea, only several seminal works made attempts in this direction~\cite{bronstein2017geometric,monti2016geometric,simonovsky2017dynamic}. These works try to generalize the convolution operations from 2D images to graphs. However, since it is hard to define convolution operations on graphs, we use a simple graph-based neural network layer that is different from previous works: we construct the K-nearest neighbor graph (K-NNG) and repeatedly conduct the max-pooling operations in each node's neighborhood. It generalizes the global max-pooling operation proposed in~\cite{qi2016pointnet} in that the max-pooling is only applied to each local neighborhood to generate local data signatures. Compared to the above graph based convolution networks, our design is simpler and computationally efficient as in~\cite{qi2016pointnet}. K-NNGs are also used in other applications of point clouds without the deep learning framework such as surface detection, 3D object recognition, 3D object segmentation and compression~\cite{golovinskiy2009shape, strom2010graph, thanou2016graph}.

The folding operation that reconstructs a surface from a 2D grid essentially establishes a mapping from a 2D regular domain to a 3D point cloud. A natural question to ask is whether we can parameterize 3D points with compatible meshes that are not necessarily regular grids, such as \emph{cross-parametrization} \cite{kraevoy2004cross}. From Table~\ref{table:visual_training}, it seems that FoldingNet can learn to generate ``cuts'' on the 2D grid and generate surfaces that are not even topologically equivalent to a 2D grid, and hence make the 2D grid representation universal to some extent. Nonetheless, the reconstructed points may still have genus-wise distortions when the original surface is too complex. For example, in Table \ref{table:visual_training}, see the missing winglets on the reconstructed plane and the missing holes on the back of the reconstructed chair. To recover those finer details might require more input point samples and more complex encoder/decoder networks.
Another method to learn the surface embedding is to learn a metric alignment layer as in \cite{ezuz2017gwcnn}, which may require computationally intensive internal optimization during training.
\subsection{Preliminaries and Notation}

We will often denote the point set by $S$. We use bold lower-case letters to represent vectors, such as $\mathbf{x}$, and use bold upper-case letters to represent matrices, such as $\mathbf{A}$. The codeword is always represented by $\boldsymbol{\theta}$. We call a matrix $m$-by-$n$ or $m\times n$ if it has $m$ rows and $n$ columns.

\section{FoldingNet Auto-encoder on Point Clouds}

Now we propose the FoldingNet deep auto-encoder. The structure of the auto-encoder is shown in Figure~\ref{fig:AE}. The input to the encoder is an $n$-by-3 matrix. Each row of the matrix is composed of the 3D position $(x,y,z)$. The output is an $m$-by-3 matrix, representing the reconstructed point positions. The number of reconstructed points $m$ is not necessarily the same as $n$. Suppose the input contains the point set $S$ and the reconstructed point set is the set $\widehat{S}$. Then, the reconstruction error for $\widehat{S}$ is computed using a layer defined as the (extended) Chamfer distance,
\begin{equation}\label{eqn:chamfer}
\begin{split}
	d_{CH}(S,\widehat{S}) = \max\left\{\frac{1}{|S|}\sum\limits_{\mathbf{x}\in S}\min_{\widehat{\mathbf{x}}\in \widehat{S}}\|\mathbf{x}-\widehat{\mathbf{x}}\|_2, \right.\\
    \left.\frac{1}{|\widehat{S}|}\sum\limits_{\widehat{\mathbf{x}}\in \widehat{S}}\min_{\mathbf{x}\in S}\|\widehat{\mathbf{x}} - \mathbf{x}\|_2\right\}.
\end{split}
\end{equation}
The term $\min_{\widehat{\mathbf{x}}\in \widehat{S}}\|\mathbf{x}-\widehat{\mathbf{x}}\|_2$ enforces that any 3D point $\mathbf{x}$ in the original point cloud has a matching 3D point $\widehat{\mathbf{x}}$ in the reconstructed point cloud, and the term $\min_{\mathbf{x}\in S}\|\widehat{\mathbf{x}} - \mathbf{x}\|_2$ enforces the matching vice versa. The $\text{max}$ operation enforces that the distance from $S$ to $\widehat{S}$ and the distance vice versa have to be small simultaneously. The encoder computes a representation (codeword) of each input point cloud and the decoder reconstructs the point cloud using this codeword. In our experiments, the codeword length is set as 512 in accordance with \cite{achlioptas2017representation}.

\subsection{Graph-based Encoder Architecture}\label{sec:enc}

The graph-based encoder follows a similar design in \cite{shen2018kcnet} which focuses on supervised learning using point cloud neighborhood graphs. The encoder is a concatenation of multi-layer perceptrons (MLP) and graph-based max-pooling layers. The graph is the K-NNG constructed from the 3D positions of the nodes in the input point set. In experiments, we choose $K=16$. First, for every single point $v$, we compute its local covariance matrix of size 3-by-3 and vectorize it to size 1-by-9. The local covariance of $v$ is computed using the 3D positions of the points that are one-hop neighbors of $v$ (including $v$) in the K-NNG. We concatenate the matrix of point positions with size $n$-by-3 and the local covariances for all points of size $n$-by-9 into a matrix of size $n$-by-12 and input them to a 3-layer perceptron. The perceptron is applied in parallel to each row of the input matrix of size $n$-by-12. It can be viewed as a per-point function on each 3D point. The output of the perceptron is fed to two consecutive graph layers, where each layer applies max-pooling to the neighborhood of each node. More specifically, suppose the K-NN graph has adjacency matrix $\mathbf{A}$ and the input matrix to the graph layer is $\mathbf{X}$. Then, the output matrix is\vspace{-3mm}
\begin{equation}\label{eqn:graph_layer}
\mathbf{Y}=\mathbf{A}_\text{max}(\mathbf{X})\mathbf{K},
\end{equation}
where $\mathbf{K}$ is a feature mapping matrix, and the ($i$,$j$)-th entry of the matrix $\mathbf{A}_\text{max}(\mathbf{X})$ is
\begin{equation}\label{eqn:local_max}
(\mathbf{A}_\text{max}(\mathbf{X}))_{ij} = \text{ReLU}(\max_{k\in\mathcal{N}(i)}x_{kj}).
\end{equation}
The local max-pooling operation $\max_{k\in\mathcal{N}(i)}$ in \eqref{eqn:local_max} essentially computes a local signature based on the graph structure. This signature can represent the (aggregated) topology information of the local neighborhood. Through concatenations of the graph-based max-pooling layers, the network propagates the topology information into larger areas.

\subsection{Folding-based Decoder Architecture}\label{sec:dec}

The proposed decoder uses two consecutive 3-layer perceptrons to warp a fixed 2D grid into the shape of the input point cloud. The input codeword is obtained from the graph-based encoder. Before we feed the codeword into the decoder, we replicate it $m$ times and concatenate the $m$-by-512 matrix with an $m$-by-2 matrix that contains the $m$ grid points on a square centered at the origin. The result of the concatenation is a matrix of size $m$-by-$514$. The matrix is processed row-wise by a 3-layer perceptron and the output is a matrix of size $m$-by-3. After that, we again concatenate the replicated codewords to the $m$-by-3 output and feed it into a 3-layer perceptron. This output is the reconstructed point cloud. The parameter $n$ is set as per the input point cloud size, e.g. n = 2048 in our experiments, which is the same as \cite{achlioptas2017representation}.We choose $m$ grid points in a square, so $m$ is chosen as $2025$ which is the closest square number to 2048.

\begin{definition}\label{def:folding}
We call the concatenation of replicated codewords to low-dimensional grid points, followed by a point-wise MLP a \emph{folding} operation.
\end{definition}
The folding operation essentially forms a universal 2D-to-3D mapping. To intuitively see why this folding operation is a universal 2D-to-3D mapping, denote the input 2D grid points by the matrix $\mathbf{U}$. Each row of $\mathbf{U}$ is a two-dimensional grid point. Denote the $i$-th row of $\mathbf{U}$ by $\mathbf{u}_i$ and the codeword output from the encoder by $\boldsymbol{\theta}$. Then, after concatenation, the $i$-th row of the input matrix to the MLP is $[\mathbf{u}_i,\boldsymbol{\theta}]$. Since the MLP is applied in parallel to each row of the input matrix, the $i$-th row of the output matrix can be written as $f([\mathbf{u}_i,\boldsymbol{\theta}])$, where $f$ indicates the function conducted by the MLP. This function can be viewed as a parameterized high-dimensional function with the codeword $\boldsymbol{\theta}$ being a parameter to guide the structure of the function (the folding operation). Since MLPs are good at approximating non-linear functions, they can perform elaborate folding operations on the 2D grids. The high-dimensional codeword essentially stores the \emph{force} that is needed to do the folding, which makes the folding operation more diverse.

The proposed decoder has two successive folding operations. The first one folds the 2D grid to 3D space, and the second one folds inside the 3D space. We show the outputs after these two folding operations in Table~\ref{table:folding}. From column C and column D in Table~\ref{table:folding}, we can see that each folding operation conducts a relatively simple operation, and the composition of the two folding operations can produce quite elaborate surface shapes. Although the first folding seems simpler than the second one, together they lead to substantial changes in the final output. More successive folding operations can be applied if more elaborate surface shapes are required. More variations of the decoder including changes of grid dimensions and the number of folding operations can be found in Supplementary Section~\ref{sec:sup:decoders}.

\begin{table*}
\centering
\begin{tabular}{cccccccc}
\hline
Input &5K iters&10K iters&20K iters&40K iters&100K iters&500K iters&4M iters\\
\hline \\[-2.5ex]
%\hline
\includegraphics[height=0.080000\textwidth]{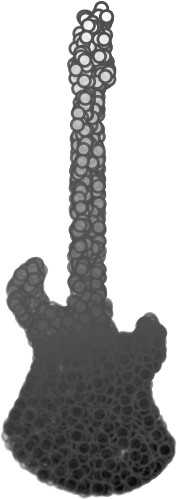}&
\includegraphics[height=0.080000\textwidth]{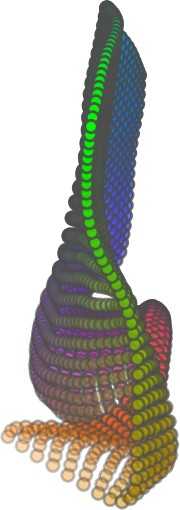}&
\includegraphics[height=0.080000\textwidth]{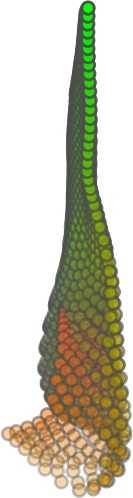}&
\includegraphics[height=0.080000\textwidth]{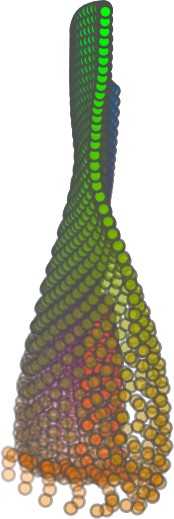}&
\includegraphics[height=0.080000\textwidth]{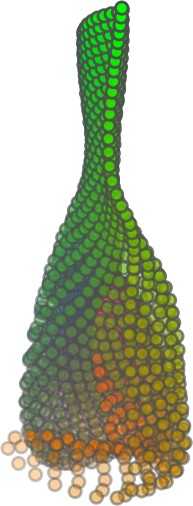}&
\includegraphics[height=0.080000\textwidth]{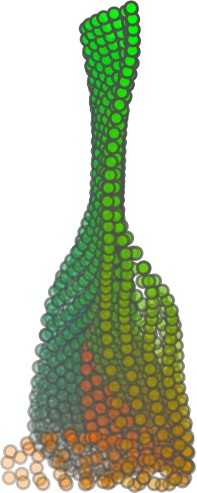}&
\includegraphics[height=0.080000\textwidth]{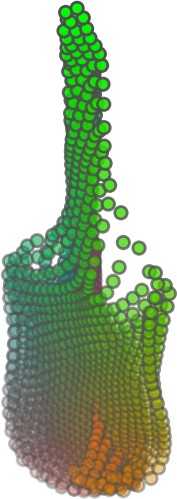}&
\includegraphics[height=0.080000\textwidth]{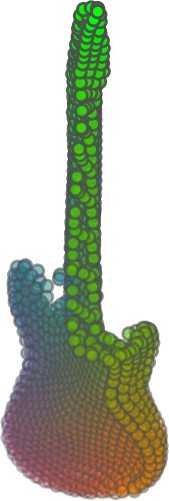}\\
%\hline
\includegraphics[height=0.060000\textwidth]{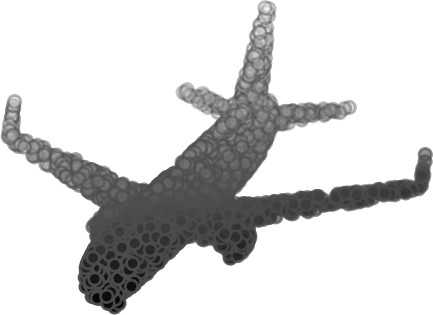}&
\includegraphics[height=0.060000\textwidth]{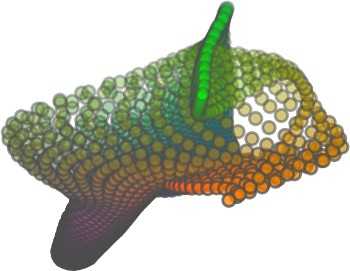}&
\includegraphics[height=0.060000\textwidth]{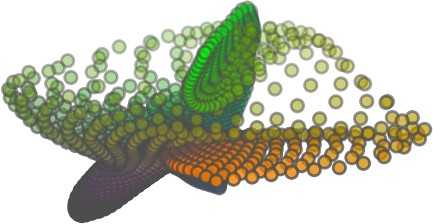}&
\includegraphics[height=0.060000\textwidth]{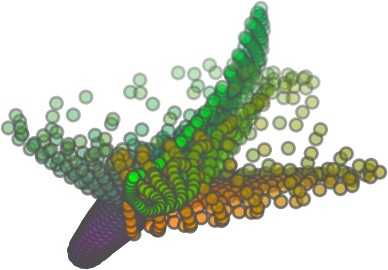}&
\includegraphics[height=0.060000\textwidth]{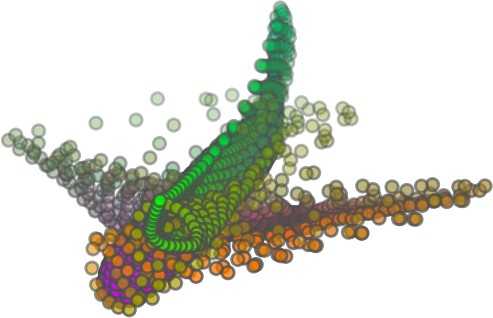}&
\includegraphics[height=0.060000\textwidth]{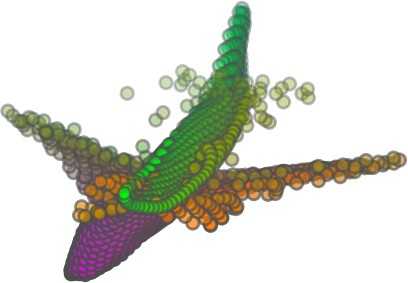}&
\includegraphics[height=0.060000\textwidth]{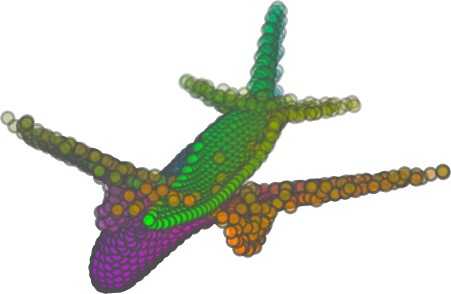}&
\includegraphics[height=0.060000\textwidth]{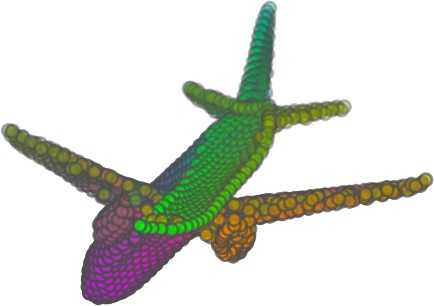}\\
%\hline
\includegraphics[height=0.060000\textwidth]{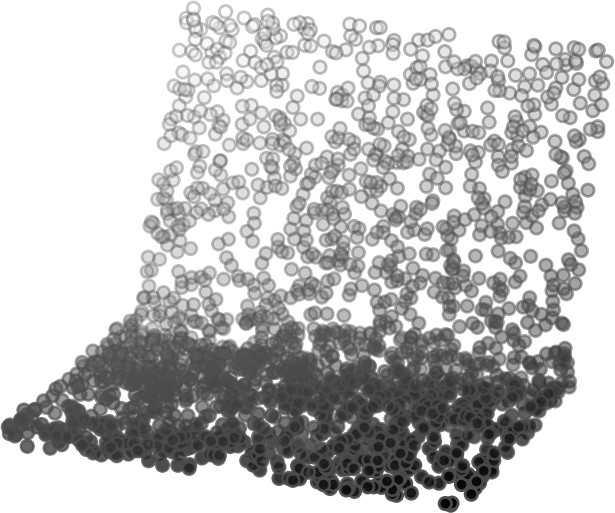}&
\includegraphics[height=0.060000\textwidth]{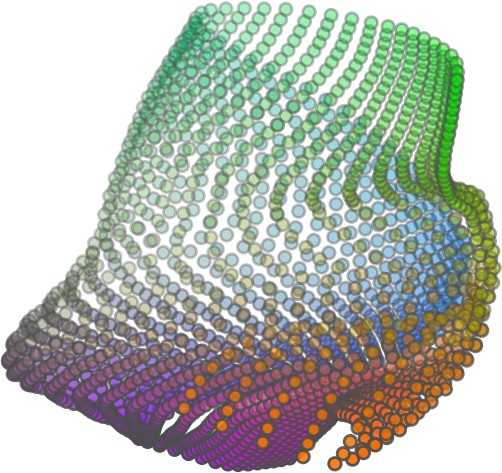}&
\includegraphics[height=0.060000\textwidth]{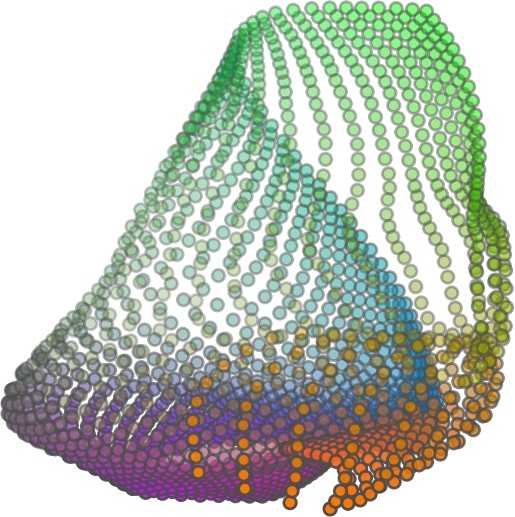}&
\includegraphics[height=0.060000\textwidth]{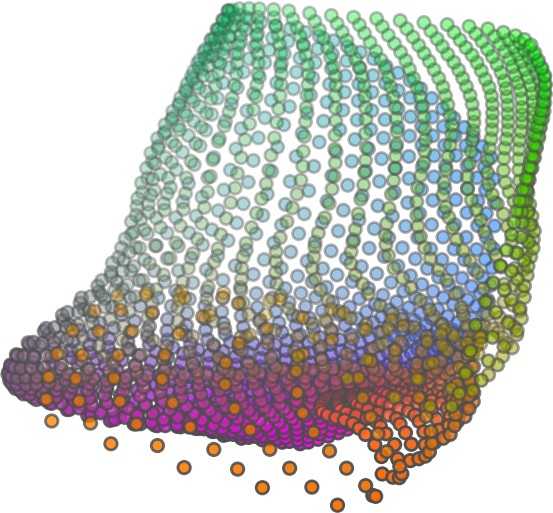}&
\includegraphics[height=0.060000\textwidth]{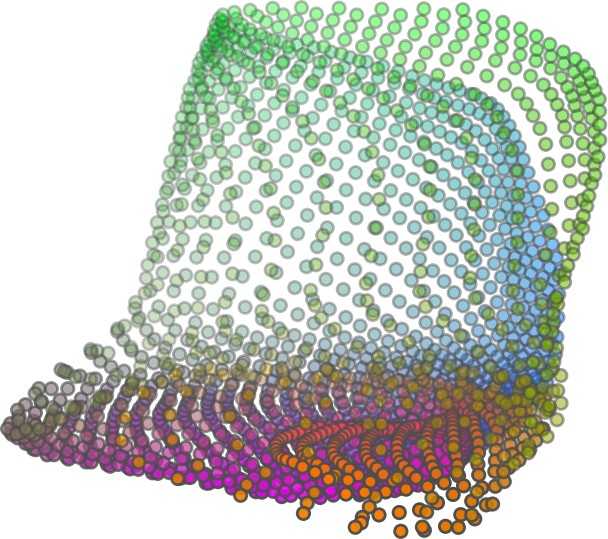}&
\includegraphics[height=0.060000\textwidth]{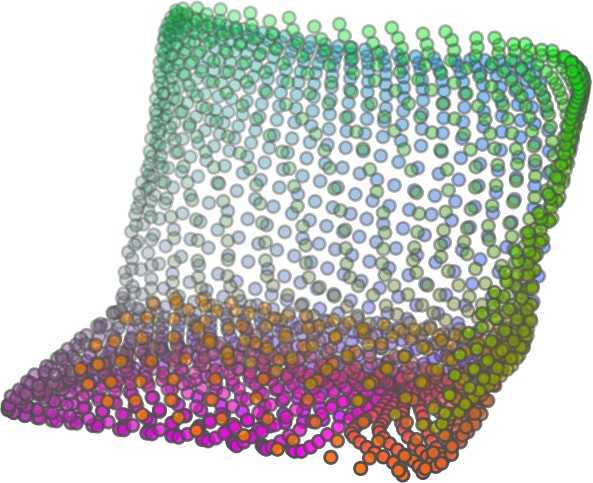}&
\includegraphics[height=0.060000\textwidth]{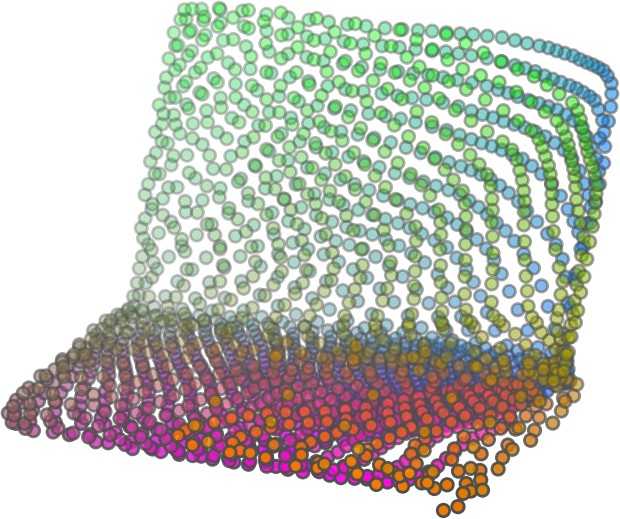}&
\includegraphics[height=0.060000\textwidth]{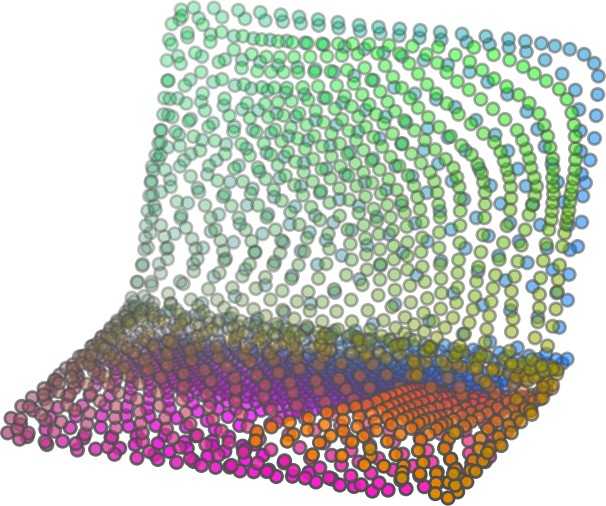}\\
%\hline
\includegraphics[height=0.05000\textwidth]{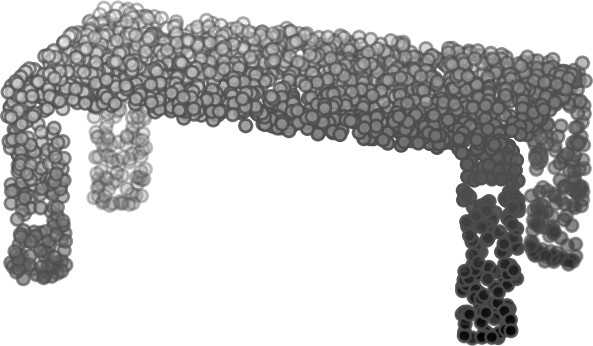}&
\includegraphics[height=0.05000\textwidth]{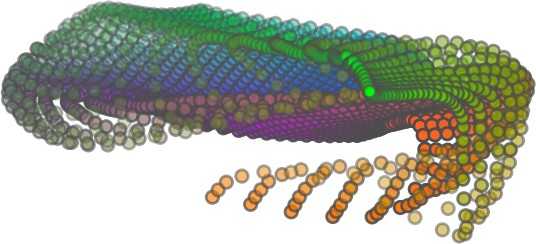}&
\includegraphics[height=0.05000\textwidth]{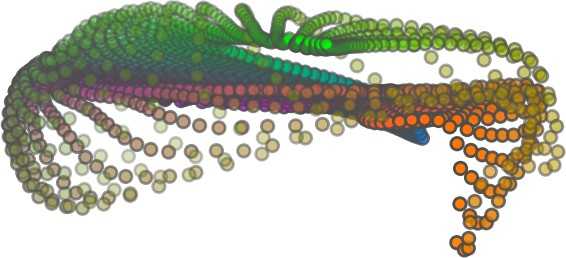}&
\includegraphics[height=0.05000\textwidth]{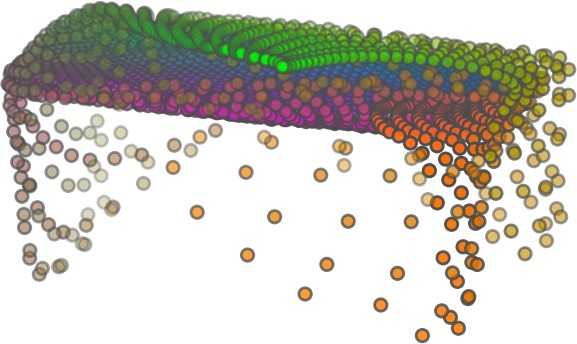}&
\includegraphics[height=0.05000\textwidth]{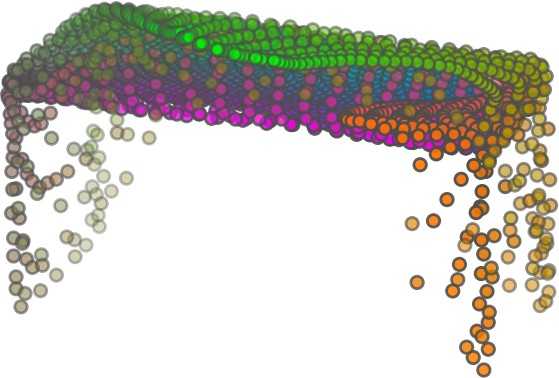}&
\includegraphics[height=0.05000\textwidth]{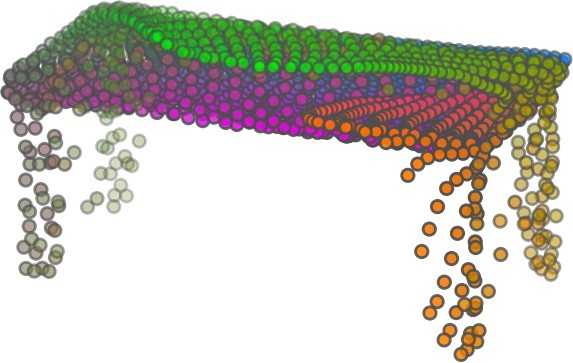}&
\includegraphics[height=0.05000\textwidth]{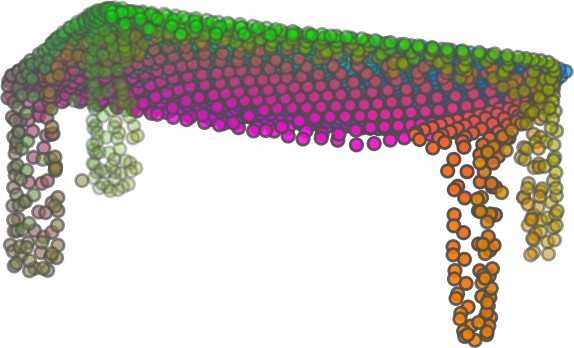}&
\includegraphics[height=0.05000\textwidth]{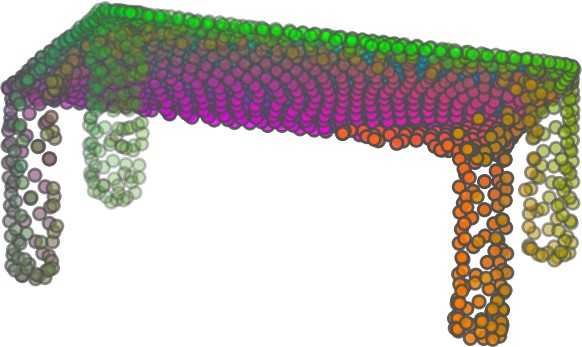}\\
%\hline
\includegraphics[height=0.070000\textwidth]{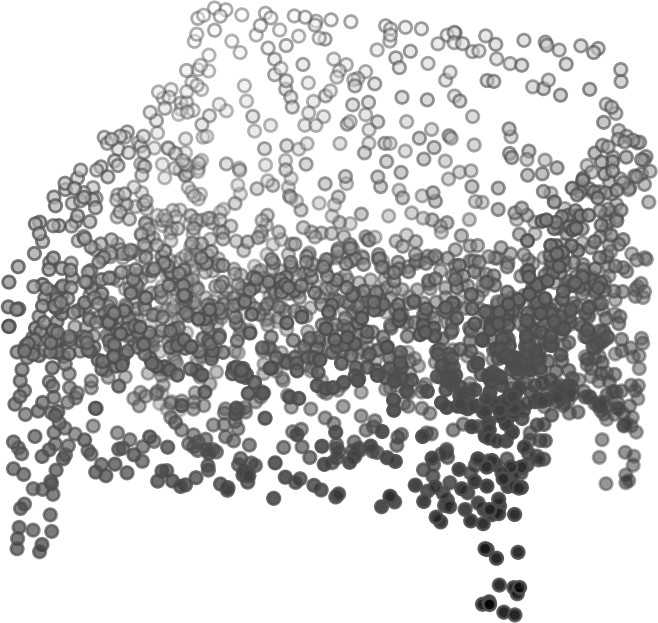}&
\includegraphics[height=0.070000\textwidth]{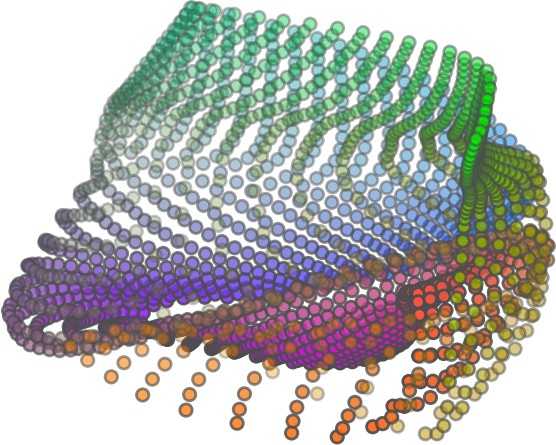}&
\includegraphics[height=0.070000\textwidth]{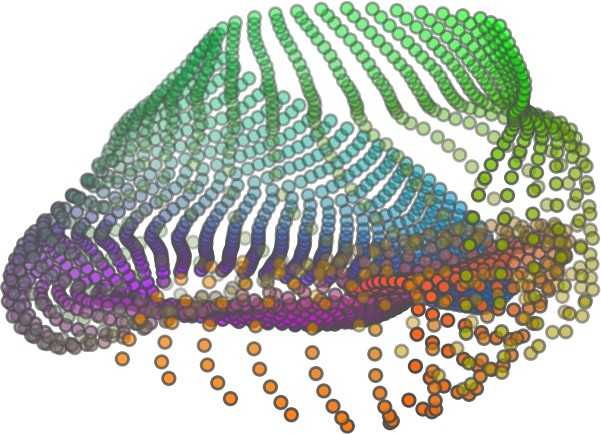}&
\includegraphics[height=0.070000\textwidth]{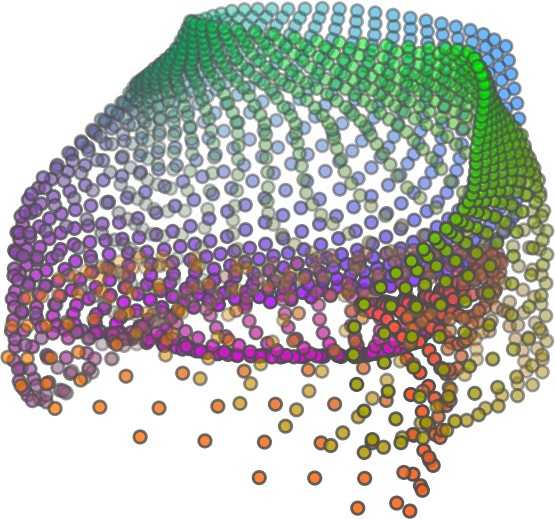}&
\includegraphics[height=0.070000\textwidth]{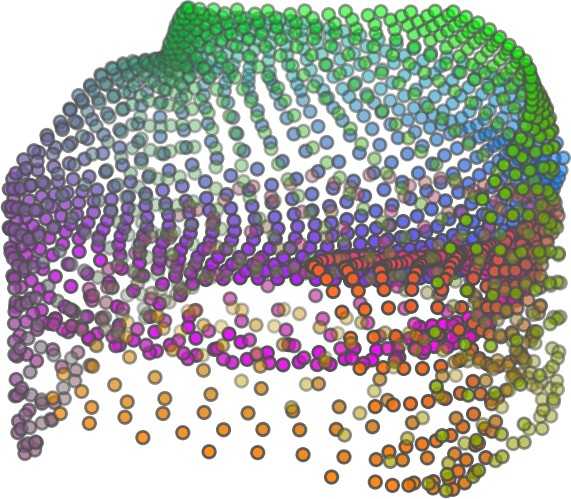}&
\includegraphics[height=0.070000\textwidth]{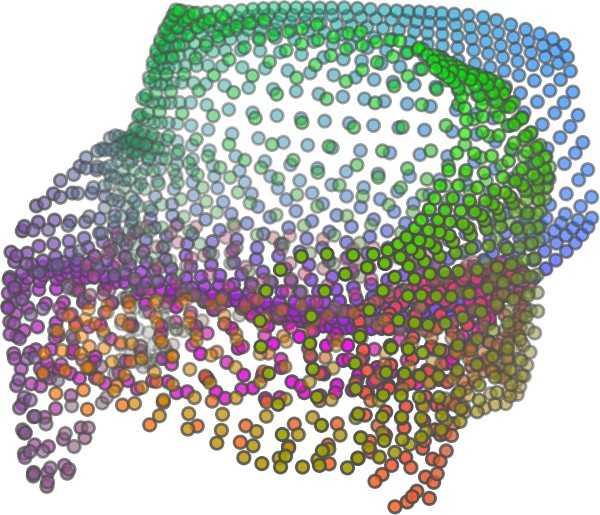}&
\includegraphics[height=0.070000\textwidth]{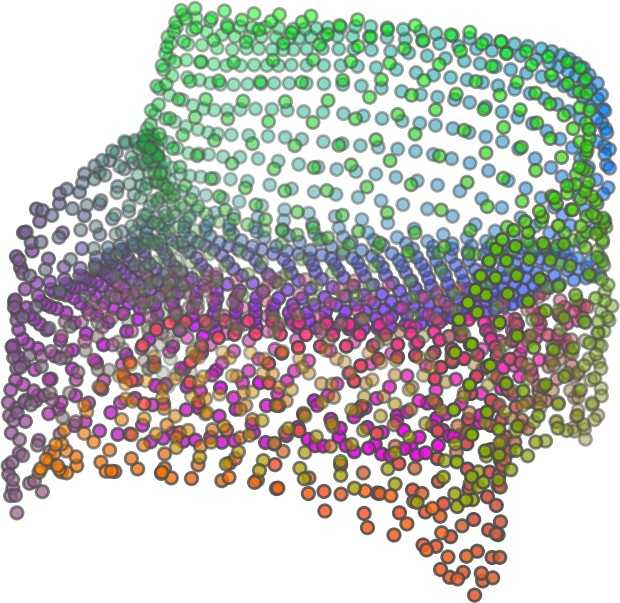}&
\includegraphics[height=0.070000\textwidth]{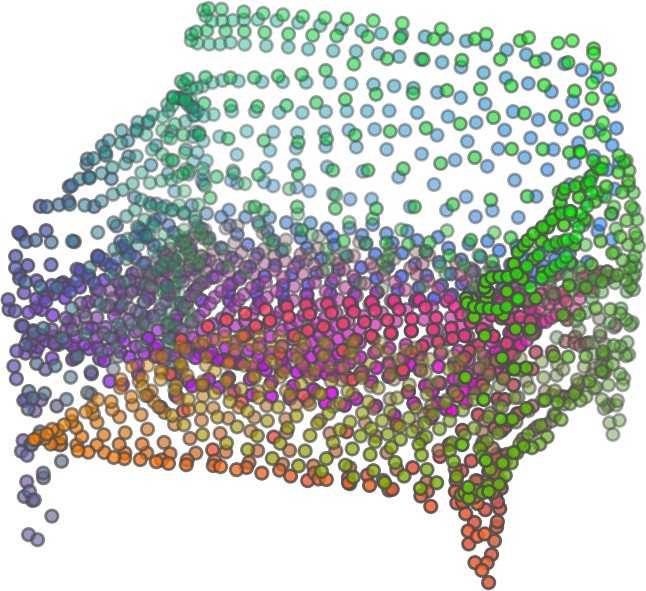}\\
%\hline
\includegraphics[height=0.070000\textwidth]{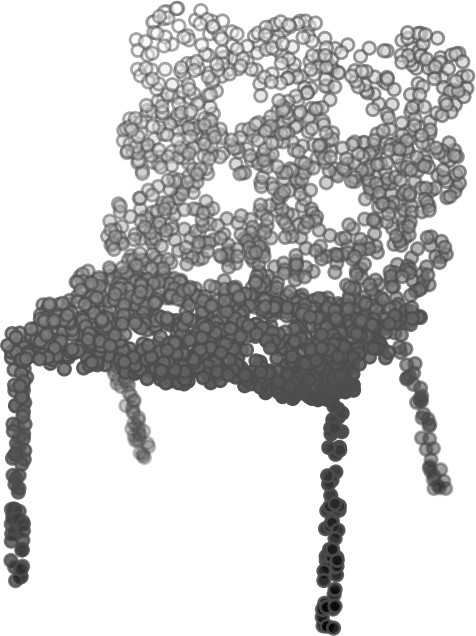}&
\includegraphics[height=0.070000\textwidth]{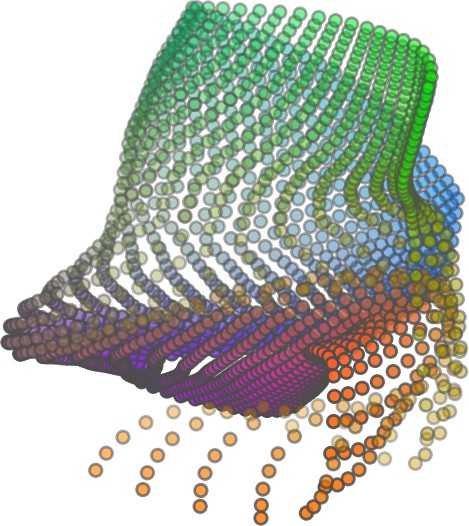}&
\includegraphics[height=0.070000\textwidth]{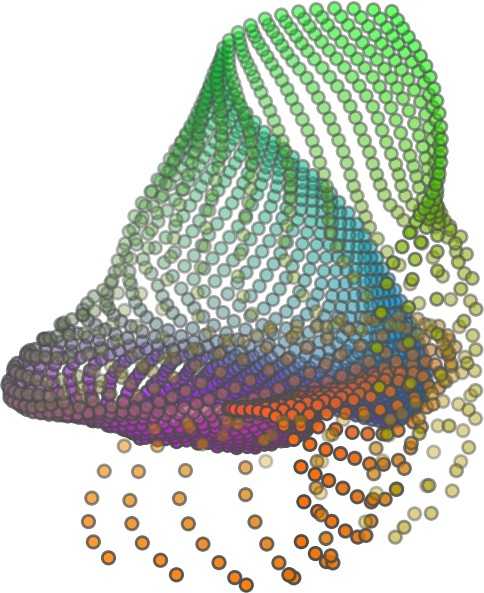}&
\includegraphics[height=0.070000\textwidth]{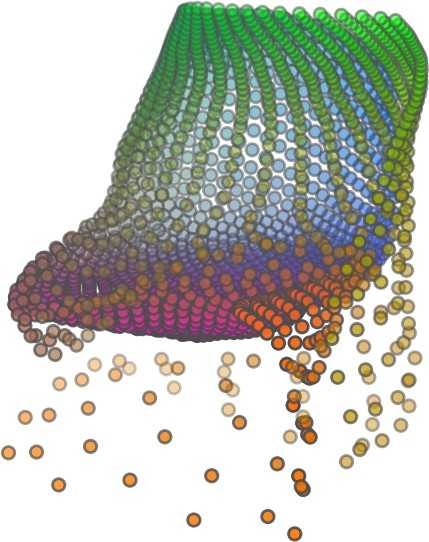}&
\includegraphics[height=0.070000\textwidth]{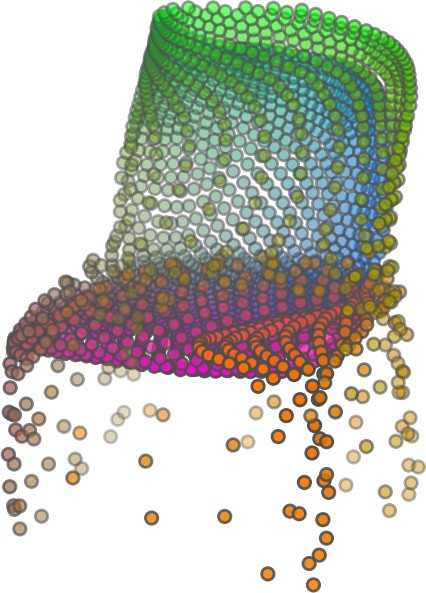}&
\includegraphics[height=0.070000\textwidth]{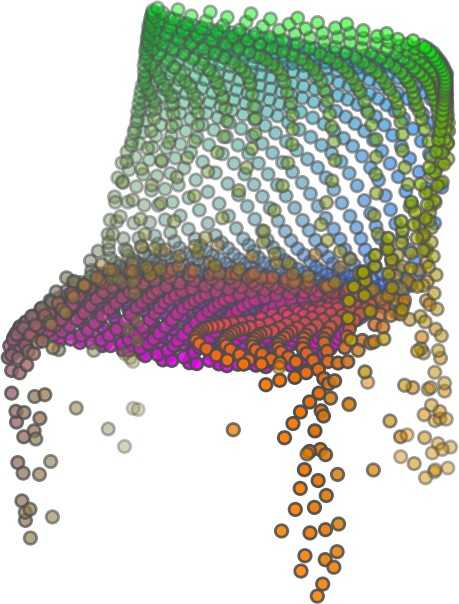}&
\includegraphics[height=0.070000\textwidth]{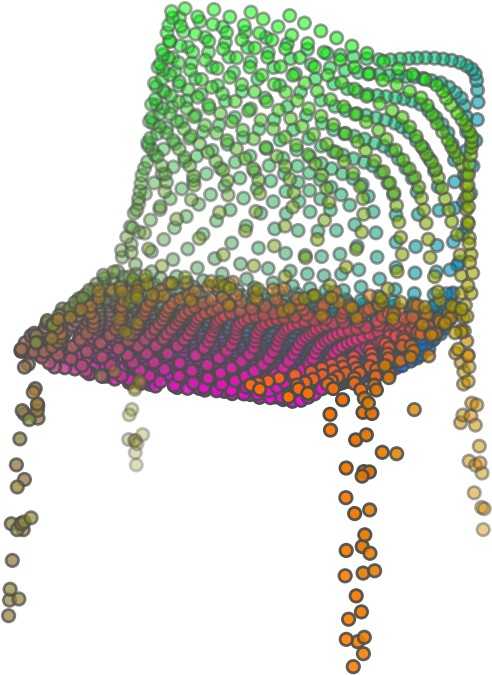}&
\includegraphics[height=0.070000\textwidth]{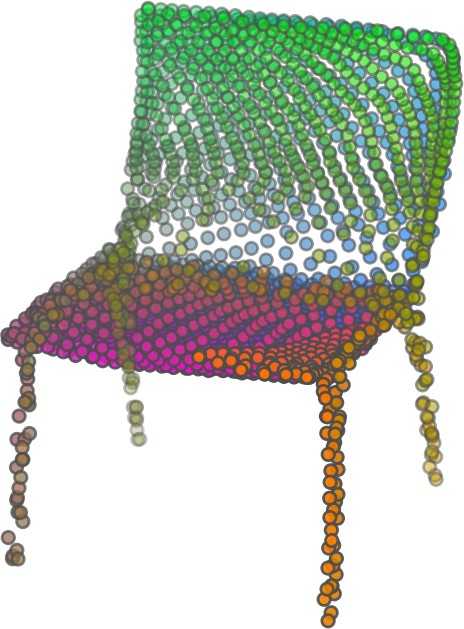}\\
\hline
\end{tabular}\vspace{1mm}
\caption{Illustration of the training process. Random 2D manifolds gradually transform into the surfaces of point clouds.}
%\vspace{-2mm}
\label{table:visual_training}
\end{table*}
\section{Theoretical Analysis}\label{sec:theory}

\begin{theorem}\label{thm:perm_inv}
The proposed encoder structure is permutation invariant, i.e., if the rows of the input point cloud matrix are permuted, the codeword remains unchanged.
\end{theorem}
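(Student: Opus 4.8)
The plan is to exhibit the encoder as a composition of permutation-\emph{equivariant} maps terminated by a permutation-\emph{invariant} global pooling, so that invariance of the codeword follows immediately. I would encode a reordering of the rows as left multiplication by an $n\times n$ permutation matrix $\mathbf{\Pi}$, so the input point matrix $\mathbf{X}$ becomes $\mathbf{\Pi}\mathbf{X}$. The goal is then to prove, by induction over the layers, that every intermediate feature matrix $\mathbf{X}_\ell$ transforms as $\mathbf{X}_\ell \mapsto \mathbf{\Pi}\mathbf{X}_\ell$ while the K-NNG adjacency matrix transforms as $\mathbf{A}\mapsto \mathbf{\Pi}\mathbf{A}\mathbf{\Pi}^\top$. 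Since the codeword $\boldsymbol{\theta}$ is produced by a column-wise global max over the rows of the last feature matrix (optionally followed by an MLP acting on the single pooled vector, which cannot affect invariance), and such a max is insensitive to row order, the codeword is unchanged.

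The inductive step amounts to checking each building block. First, the K-NNG is built purely from pairwise Euclidean distances between the 3D positions; relabeling the points preserves all such distances, so the neighbor set of the relabeled point is exactly the relabeled neighbor set of the original, which is precisely the statement $\mathbf{A}\mapsto\mathbf{\Pi}\mathbf{A}\mathbf{\Pi}^\top$ (assuming generic positions, or a deterministic tie-breaking rule that depends only on positions, so that the graph is well defined and relabeling-consistent). Second, the local covariance of each point is a symmetric function of the positions of its one-hop neighbors, hence it is computed identically for the relabeled point and simply moves to row $\pi(i)$; concatenating the equivariant position block with the equivariant covariance block keeps the $n$-by-12 matrix equivariant. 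Third, the initial 3-layer perceptron is applied independently and identically to each row, so permuting the input rows permutes the output rows the same way.

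It remains to handle the graph max-pooling layer of \eqref{eqn:graph_layer}--\eqref{eqn:local_max}. Using the induction hypotheses $\mathbf{X}\mapsto\mathbf{\Pi}\mathbf{X}$ and $\mathbf{A}\mapsto\mathbf{\Pi}\mathbf{A}\mathbf{\Pi}^\top$, the local maximum $\max_{k\in\mathcal{N}(i)}x_{kj}$ in \eqref{eqn:local_max} is taken over the relabeled neighborhood, giving $\mathbf{A}_{\text{max}}(\mathbf{\Pi}\mathbf{X})=\mathbf{\Pi}\,\mathbf{A}_{\text{max}}(\mathbf{X})$; the subsequent right multiplication by the feature map $\mathbf{K}$ acts only on columns and therefore commutes with the row permutation, so $\mathbf{Y}\mapsto\mathbf{\Pi}\mathbf{Y}$, closing the induction. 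Finally, for the terminal pooling, $\max_i (\mathbf{\Pi}\mathbf{X})_{ij}=\max_i x_{ij}$ for every feature coordinate $j$, which yields the invariant codeword.

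I expect the main obstacle to be the equivariance of the graph itself rather than of the neural layers: one must argue carefully that the discrete neighborhood structure transforms as $\mathbf{\Pi}\mathbf{A}\mathbf{\Pi}^\top$ and that the local max in \eqref{eqn:local_max} respects this relabeling, including a clean treatment of distance ties so that the K-NNG is well defined. Once the graph construction and the local max-pooling are established as equivariant, the per-row MLPs and the final global pooling are routine.
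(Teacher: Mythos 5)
Your proposal is correct and takes essentially the same route as the paper's proof: layer-by-layer permutation equivariance (per-row MLP, then the graph layer via $\mathbf{A}_\text{max}(\mathbf{\Pi}\mathbf{X})=\mathbf{\Pi}\,\mathbf{A}_\text{max}(\mathbf{X})$, argued through the relabeled neighborhoods $\mathcal{N}(\pi(i))$), terminated by the invariance of the global max-pooling. If anything, you are more thorough than the paper, which starts from the $n$-by-$12$ input matrix as given and leaves implicit both the equivariance of the K-NNG construction (your $\mathbf{A}\mapsto\mathbf{\Pi}\mathbf{A}\mathbf{\Pi}^\top$, including tie-breaking) and that of the per-point covariance features.
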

\begin{proof}
See Supplementary Section~\ref{sec:permutation}.
\end{proof}

Then, we state a theorem about the universality of the proposed folding-based decoder. It shows the existence of a folding-based decoder such that by changing the codeword $\boldsymbol{\theta}$, the output can be an arbitrary point cloud.

\begin{theorem}\label{thm:1}
There exists a 2-layer perceptron that can reconstruct arbitrary point clouds from a 2-dimensional grid using the \emph{folding} operation.

More specifically, suppose the input is a matrix $\mathbf{U}$ of size $m$-by-2 such that each row of $\mathbf{U}$ is the 2D position of a point on a 2-dimensional grid of size $m$. Then, there exists an explicit construction of a 2-layer perceptron (with hand-crafted coefficients) such that for any arbitrary 3D point cloud matrix $\mathbf{S}$ of size $m$-by-3 (where each row of $\mathbf{S}$ is the $(x,y,z)$ position of a point in the point cloud), there exists a codeword vector $\boldsymbol{\theta}$ such that if we concatenate $\boldsymbol{\theta}$ to each row of $\mathbf{U}$ and apply the 2-layer perceptron in parallel to each row of the matrix after concatenation, we obtain the point cloud matrix $\mathbf{S}$ from the output of the perceptron.
\end{theorem}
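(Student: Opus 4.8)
The plan is to exhibit an explicit one-hidden-layer network whose fixed weights depend only on the grid $\mathbf{U}$, and then to show that the codeword can carry an (invertible) encoding of the target cloud $\mathbf{S}$ that this fixed network decodes exactly at the $m$ grid addresses. First I would reduce the problem to one dimension. Since the rows $\mathbf{u}_1,\dots,\mathbf{u}_m$ of $\mathbf{U}$ are distinct, I pick a fixed linear functional $\mathbf{a}$ (e.g. $\mathbf{a}=(1,c)^\top$ with $c$ chosen so that no two grid points collide) so that the scalar addresses $t_i := \mathbf{a}^\top\mathbf{u}_i$ are pairwise distinct; relabel so that $t_{(1)}<\cdots<t_{(m)}$ and write $g>0$ for the smallest gap. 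This turns the claim into: build a fixed 2-layer perceptron $F$ in the variables $(t,\boldsymbol{\theta})$ such that, for prescribed targets $\mathbf{s}_1,\dots,\mathbf{s}_m\in\mathbb{R}^3$, a suitable $\boldsymbol{\theta}$ makes $F(t_i,\boldsymbol{\theta})=\mathbf{s}_i$ for every $i$. Because the perceptron acts per row and sees both the address $t_i$ and the shared codeword $\boldsymbol{\theta}$, each first-layer pre-activation may be made any fixed affine form $\kappa t_i + (\text{fixed linear in }\boldsymbol{\theta}) + \text{const}$.

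The core gadget I would build is a pair of ReLU units realizing a codeword-scaled step. Fix a threshold $\bar t_j$ in the open gap $(t_{(j-1)},t_{(j)})$ and a large fixed gain $\kappa$, and let $\lambda$ denote a coordinate of $\boldsymbol{\theta}$. Then
\begin{equation*}
\mathrm{ReLU}\big(\lambda+\kappa(t-\bar t_j)\big)-\mathrm{ReLU}\big(-\lambda+\kappa(t-\bar t_j)\big)
\end{equation*}
equals $2\lambda$ at every grid address $t\ge t_{(j)}$ and equals $0$ at every grid address $t\le t_{(j-1)}$, provided $\kappa\, g/2$ exceeds $|\lambda|$ so that the ``off'' side truly vanishes. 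Summing such gadgets over $j$ with fixed output weight $\tfrac12$ produces, at the grid addresses, the left-cumulative sum $\sum_{t_{(k)}\le t}\lambda_k$; by the telescoping choice $\lambda_k=(\text{$k$-th sorted target})-(\text{$(k-1)$-th sorted target})$ this cumulative sum reproduces any prescribed sequence of target values. Running three independent copies (one per output coordinate) and stacking them into a single hidden layer yields $F$ with $6m$ hidden units. I would then read off that all first-layer coefficients are the fixed numbers $\kappa,\bar t_j,\mathbf{a}$ and all output weights are $\pm\tfrac12$, i.e. everything is hand-crafted from the grid alone, while the codeword $\boldsymbol{\theta}$ is exactly the vector of telescoping differences of the coordinates of $\mathbf{S}$ --- a fixed linear, hence always solvable, image of $\mathbf{S}$.

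The step I expect to be the main obstacle is the \emph{bilinearity} of the selection: the desired map is bilinear in the (address-dependent) selector and the (codeword-stored) target, and a single hidden layer can realize an exact product only through the threshold-gating trick above, whose correctness rests on two calibrations. The first is placing thresholds strictly inside the grid gaps, which is exactly why the distinctness reduction and the gap $g$ are needed. The second, and the genuinely delicate point, is that the gain $\kappa$ is fixed before $\mathbf{S}$ is revealed, so the ``off'' branch vanishes only while $|\lambda|\le \kappa g/2$; to make one fixed perceptron serve every cloud I would normalize point clouds to a bounded region (as is standard for this data) and choose $\kappa$ from that bound, stating this normalization explicitly rather than claiming it for literally unbounded $\mathbf{S}$. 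Everything else is bookkeeping: counting units, verifying the telescoping identity at the $m$ addresses, and confirming that each affine pre-activation is expressible as a fixed linear map applied to the concatenated input $[\mathbf{u}_i,\boldsymbol{\theta}]$.
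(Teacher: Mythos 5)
Your construction is correct, but it takes a genuinely different route from the paper's. The paper keeps both grid coordinates and uses them directly as a 2D address: the codeword is literally the vectorized cloud $\boldsymbol{\theta}=[s_{11},s_{12},s_{13},\ldots,s_{m1},s_{m2},s_{m3}]$, the hidden layer has $3m$ units arranged in $m$ groups of $3$, and the $k$-th unit of group $j$ computes $y_{j,k}=u^2x_j(x_i-x_j)+uy_j(y_i-y_j)+s_{j,k}$ followed by a hard window activation ($z=y$ if $|y|<c$, else $0$) whose parameters $(u,\delta,c,M)$ are calibrated so the window passes $s_{j,k}$ exactly when $i=j$ and kills the unit otherwise; the output neuron just sums the groups. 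So the paper's selection gate is a bump nonlinearity matching 2D addresses with the codeword storing raw coordinates, whereas you project the grid to distinct 1D addresses, realize a codeword-scaled step with a pair of genuine ReLUs, and store telescoping differences in the codeword so that cumulative sums reproduce the targets. Your version buys two things: it uses only the standard ReLU --- the paper's window function is discontinuous at $|y|=c$ and hence cannot be written exactly as a finite ReLU combination, so its footnote claim strictly requires a continuous trapezoidal surrogate agreeing on $|y|\le 1$ and $|y|\ge c$ (which works but is not spelled out) --- and it needs no lattice structure at all, only distinctness of the grid points, avoiding the paper's $[(2\beta_i+1)\delta,(2\gamma_i+1)\delta]$ integer-lattice assumption. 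The costs are a factor-two larger hidden layer ($6m$ versus $3m$) and a codeword that is a fixed invertible linear image of $\mathbf{S}$ rather than $\mathbf{S}$ itself, which is harmless since the theorem only asserts existence of some $\boldsymbol{\theta}$. Your explicit boundedness caveat puts you on equal footing with the paper, which likewise assumes $|s_{ij}|\le 1$ despite the word ``arbitrary'' in the statement, and your gadget verification (margin $\kappa g/2>|\lambda|$ on both sides of each threshold placed mid-gap) is sound.
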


\begin{proof}[Proof in sketch]
The full proof is in Supplementary Section \ref{sec:proof}. In the proof, we show the existence by explicitly constructing a 2-layer perceptron that satisfies the stated properties. The main idea is to show that in the worst case, the points in the 2D grid functions as a selective logic gate to map the 2D points in the 2D grid to the corresponding 3D points in the point cloud.
\end{proof}

Notice that the above proof is just an existence-based one to show that our decoder structure is universal. It \emph{does not} indicate what happens in reality inside the FoldingNet auto-encoder. The theoretically constructed decoder requires $3m$ hidden units while in reality, the size of the decoder that we use is much smaller. Moreover, the construction in Theorem~\ref{thm:1} leads to a lossless reconstruction of the point cloud, while the FoldingNet auto-encoder only achieves lossy reconstruction. However, the above theorem can indeed guarantee that the proposed decoding operation (i.e., concatenating the codewords to the 2-dimensional grid points and processing each row using a perceptron) is legitimate because in the worst case there exists a folding-based neural network with hand-crafted edge weights that can reconstruct arbitrary point clouds. In reality, a good parameterization of the proposed decoder with suitable training leads to better performance.

\section{Experimental Results}

\subsection{Visualization of the Training Process}\label{exp:training}

It might not be straightforward to see how the decoder folds the 2D grid into the surface of a 3D point cloud. Therefore, we include an illustration of the training process to show how a random 2D manifold obtained by the initial random folding gradually turns into a meaningful point cloud. The auto-encoder is a single FoldingNet trained using the ShapeNet part dataset \cite{yi2016scalable} which contains 16 categories of the ShapeNet dataset. We trained the FoldingNet using ADAM with an initial learning rate 0.0001, batch size 1, momentum 0.9, momentum2 0.999, and weight decay $1\mathrm{e}{-6}$, for $4\times10^6$ iterations (i.e., 330 epochs). The reconstructed point clouds of several models after different numbers of training iterations are reported in Table~\ref{table:visual_training}. From the training process, we see that an initial random 2D manifold can be warped/cut/squeezed/stretched/attached to form the point cloud surface in various ways.

\subsection{Point Cloud Interpolation}

\begin{table*}
	\begin{tabular}{cccccccc}
		\hline
		Source & \multicolumn{6}{c}{Interpolations} &Target\\
		\hline\\[-2.5ex]
		%\hline
		\includegraphics[height=0.064000\textwidth]{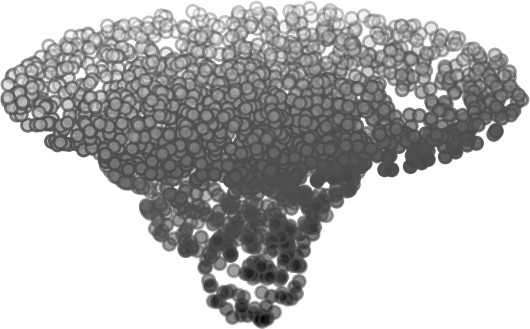}&
		\includegraphics[height=0.064000\textwidth]{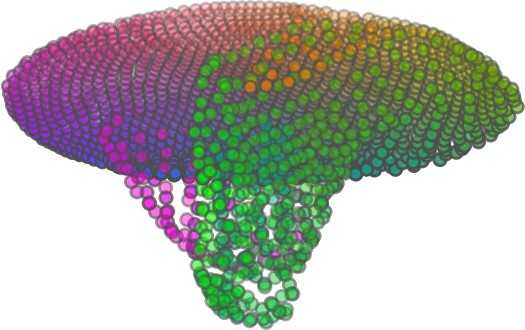}&
		\includegraphics[height=0.064000\textwidth]{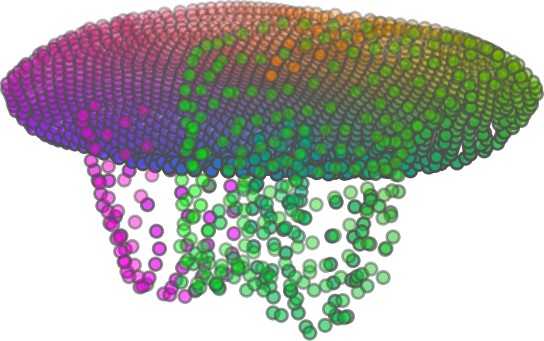}&
		\includegraphics[height=0.064000\textwidth]{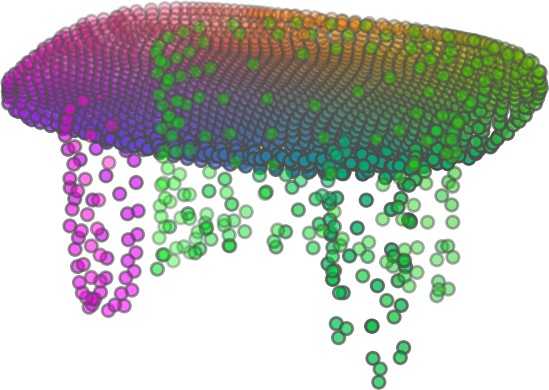}&
		\includegraphics[height=0.064000\textwidth]{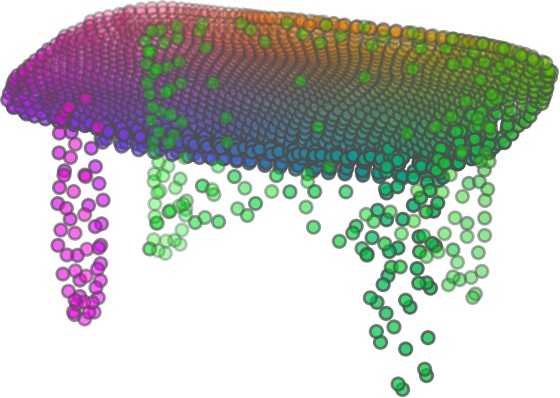}&
		\includegraphics[height=0.064000\textwidth]{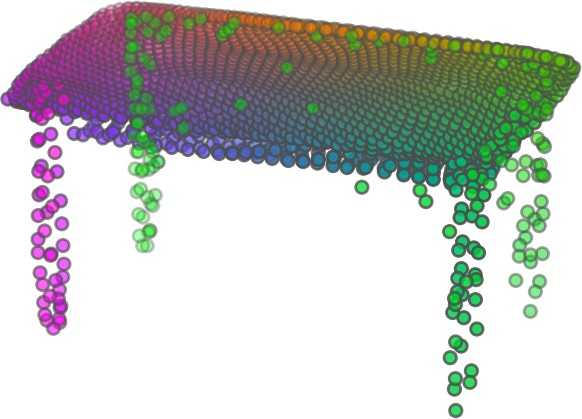}&
		\includegraphics[height=0.064000\textwidth]{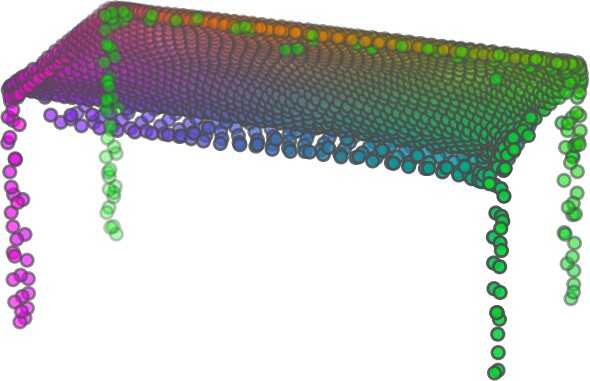}&
		\includegraphics[height=0.064000\textwidth]{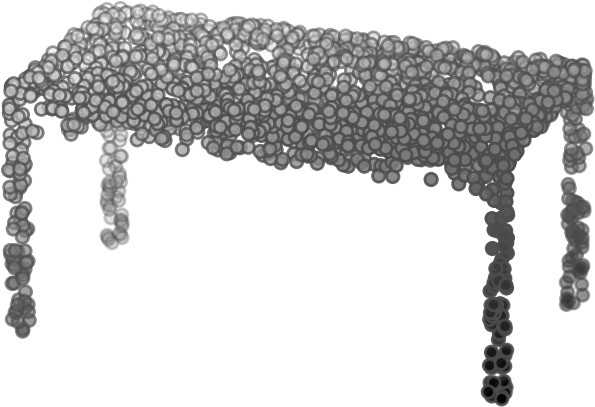}\\
		%\hline
		\includegraphics[height=0.080000\textwidth]{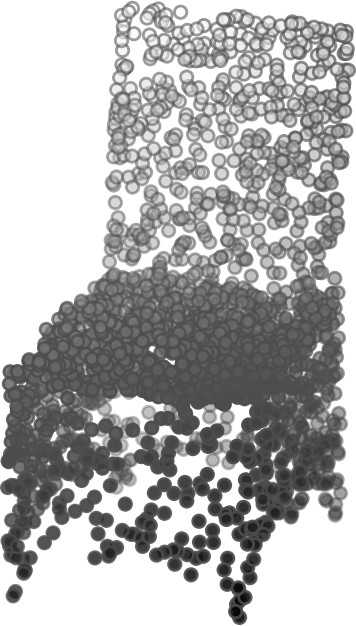}&
		\includegraphics[height=0.080000\textwidth]{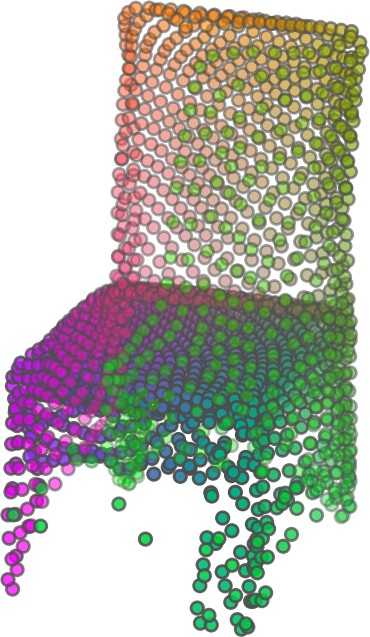}&
		\includegraphics[height=0.080000\textwidth]{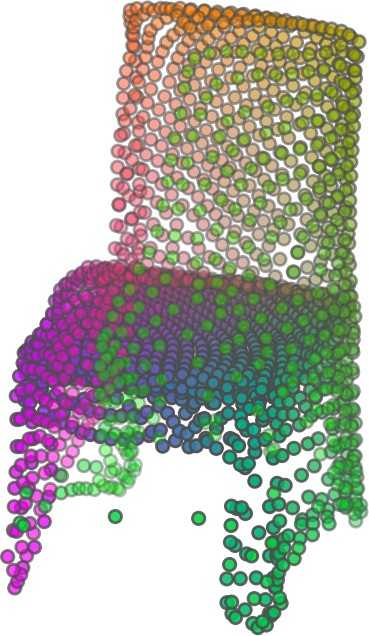}&
		\includegraphics[height=0.080000\textwidth]{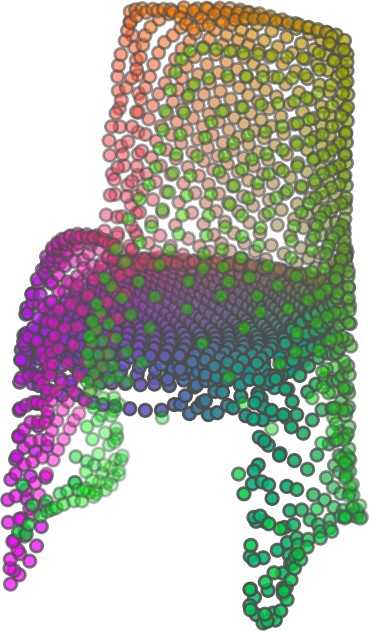}&
		\includegraphics[height=0.080000\textwidth]{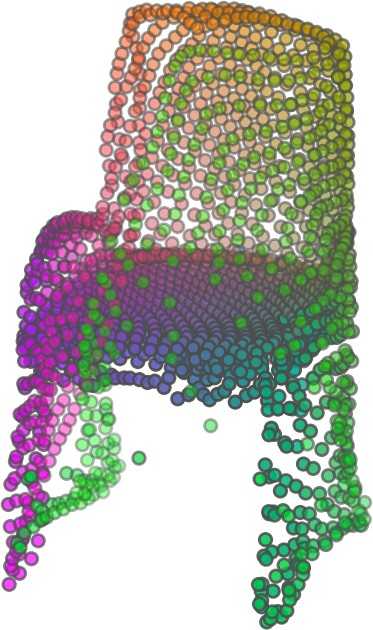}&
		\includegraphics[height=0.080000\textwidth]{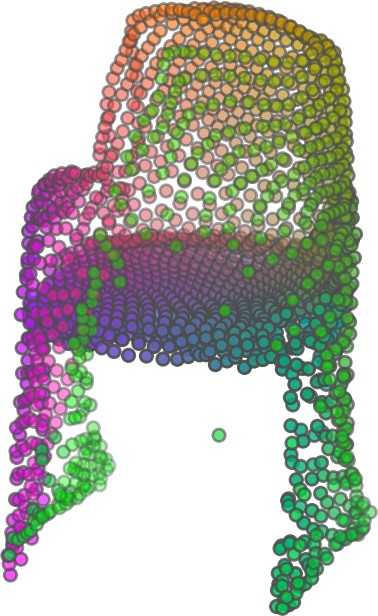}&
		\includegraphics[height=0.080000\textwidth]{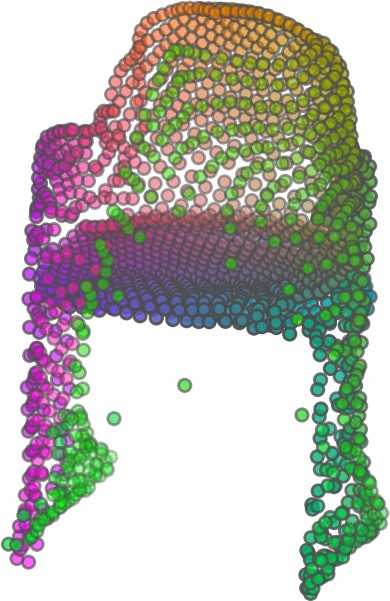}&
		\includegraphics[height=0.080000\textwidth]{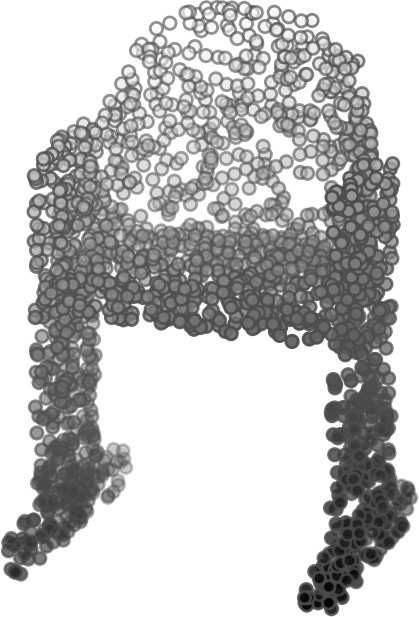}\\
		%\hline
		\includegraphics[height=0.07000\textwidth]{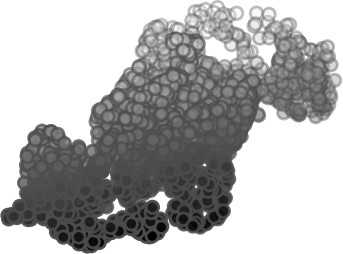}&
		\includegraphics[height=0.07000\textwidth]{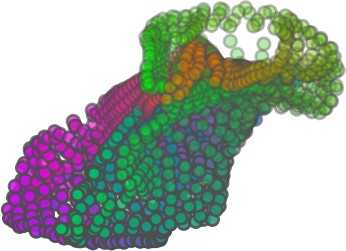}&
		\includegraphics[height=0.07000\textwidth]{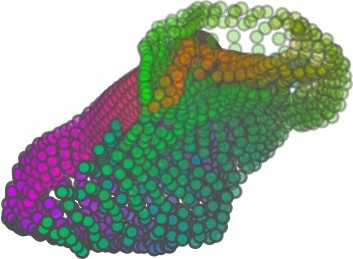}&
		\includegraphics[height=0.07000\textwidth]{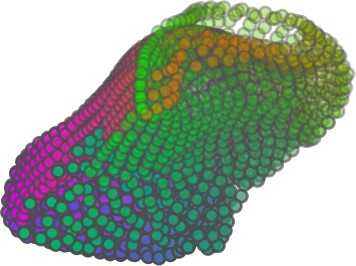}&
		\includegraphics[height=0.07000\textwidth]{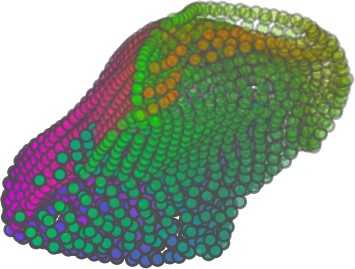}&
		\includegraphics[height=0.07000\textwidth]{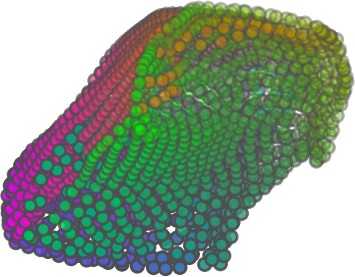}&
		\includegraphics[height=0.07000\textwidth]{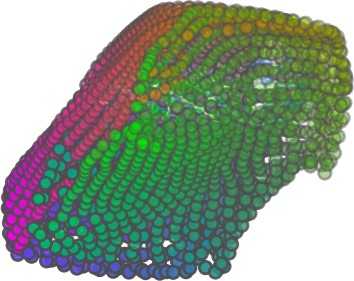}&
		\includegraphics[height=0.07000\textwidth]{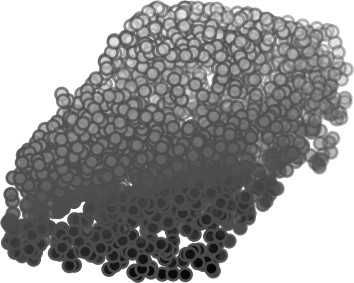}\\
		%\hline
		\hline\\[-2.5ex]
		\includegraphics[height=0.060000\textwidth]{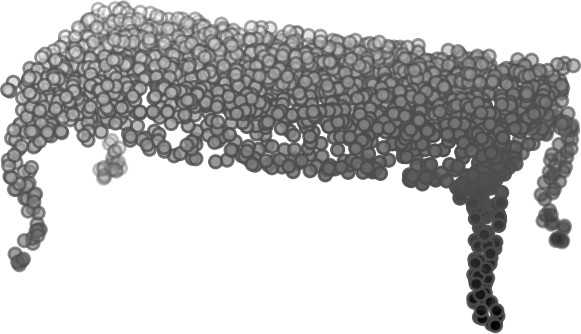}&
		\includegraphics[height=0.060000\textwidth]{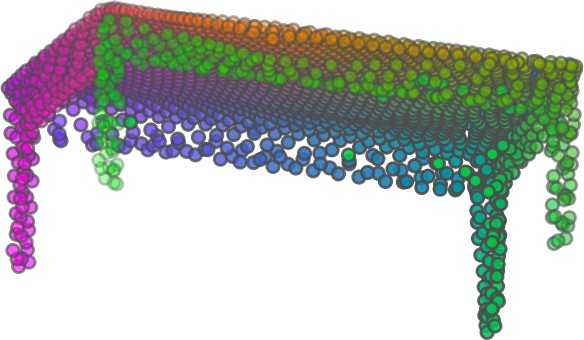}&
		\includegraphics[height=0.0650000\textwidth]{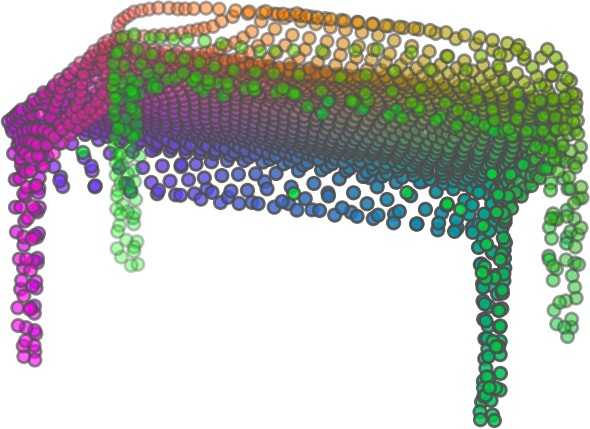}&
		\includegraphics[height=0.070000\textwidth]{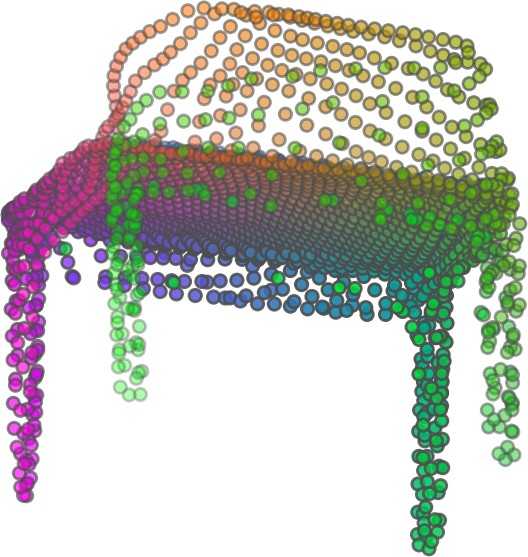}&
		\includegraphics[height=0.070000\textwidth]{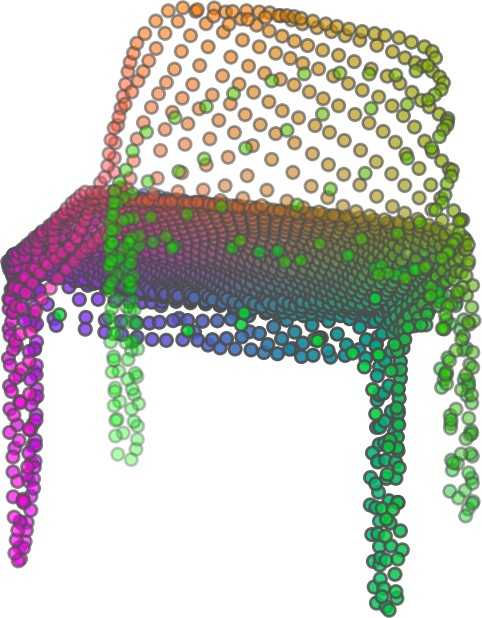}&
		\includegraphics[height=0.070000\textwidth]{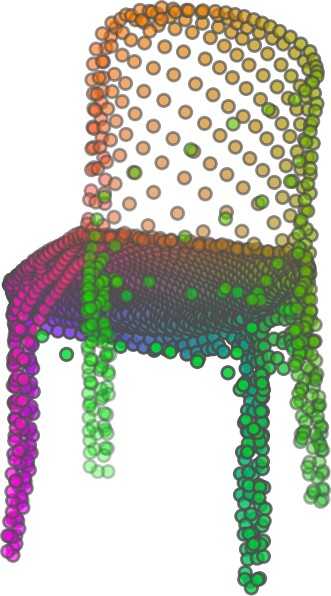}&
		\includegraphics[height=0.070000\textwidth]{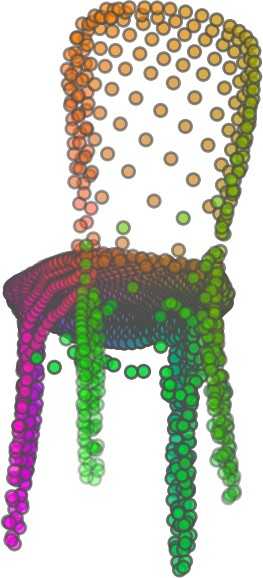}&
		\includegraphics[height=0.070000\textwidth]{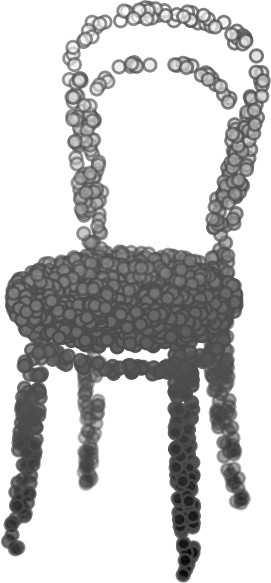}\\
		%\hline
		\includegraphics[height=0.060000\textwidth]{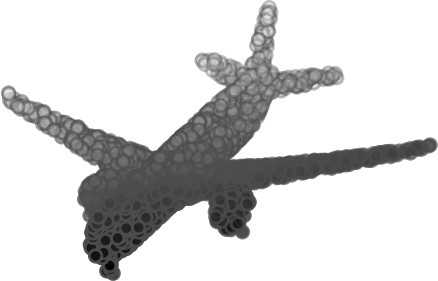}&
		\includegraphics[height=0.060000\textwidth]{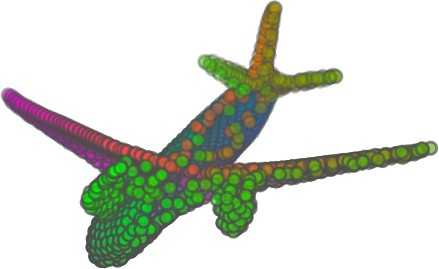}&
		\includegraphics[height=0.060000\textwidth]{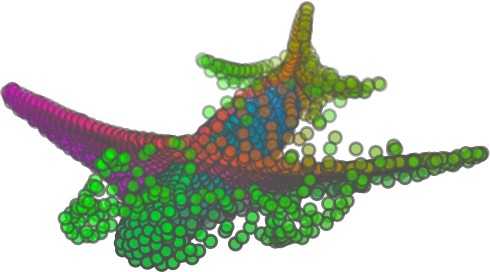}&
		\includegraphics[height=0.060000\textwidth]{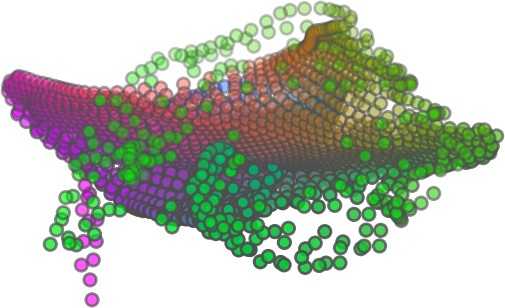}&
		\includegraphics[height=0.060000\textwidth]{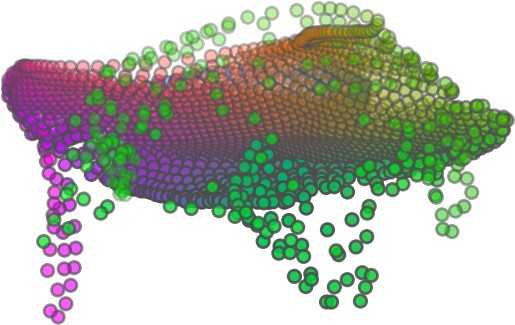}&
		\includegraphics[height=0.060000\textwidth]{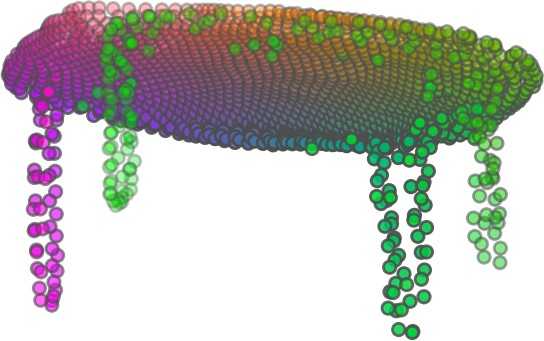}&
		\includegraphics[height=0.060000\textwidth]{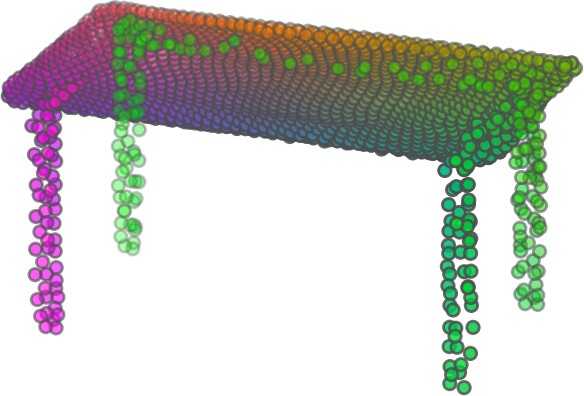}&
		\includegraphics[height=0.060000\textwidth]{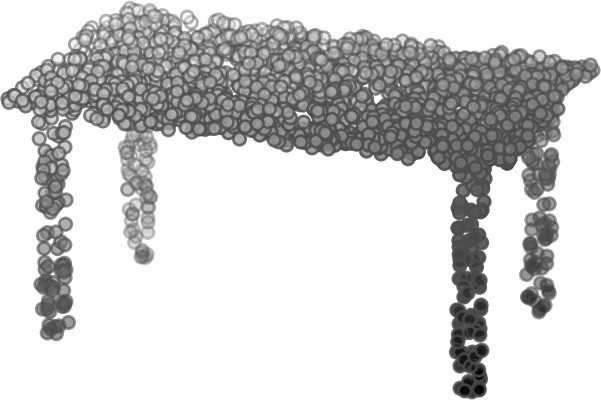}\\
		%\hline
		\includegraphics[height=0.080000\textwidth]{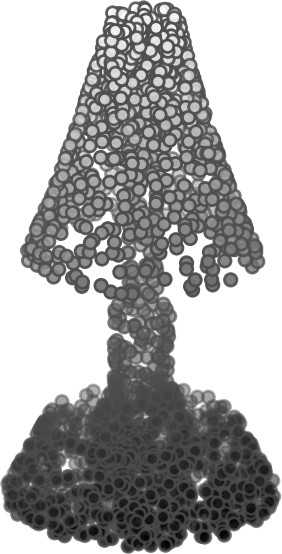}&
		\includegraphics[height=0.080000\textwidth]{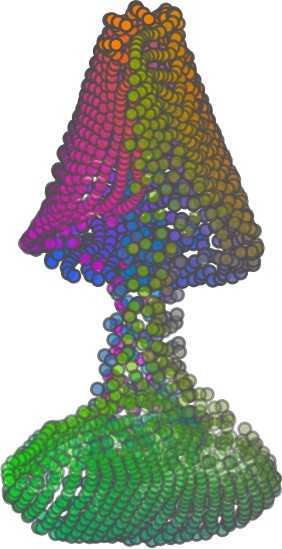}&
		\includegraphics[height=0.080000\textwidth]{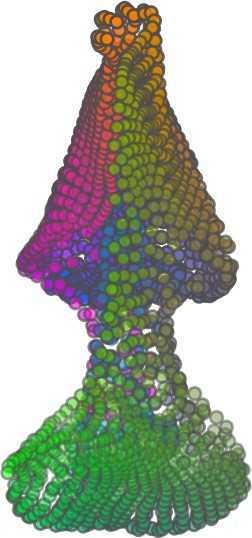}&
		\includegraphics[height=0.080000\textwidth]{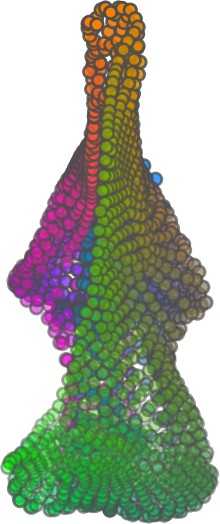}&
		\includegraphics[height=0.080000\textwidth]{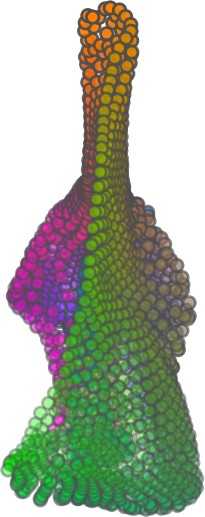}&
		\includegraphics[height=0.080000\textwidth]{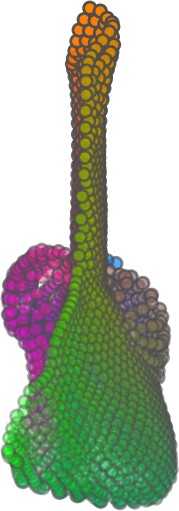}&
		\includegraphics[height=0.080000\textwidth]{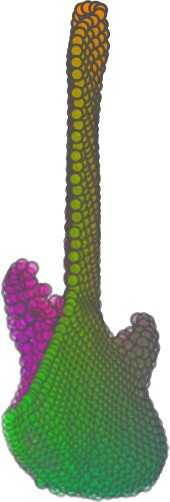}&
		\includegraphics[height=0.080000\textwidth]{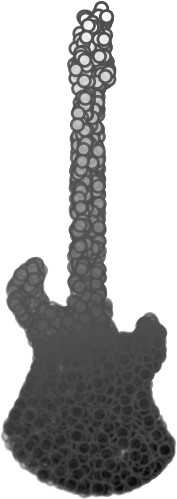}\\
		\hline
	\end{tabular}\vspace{1mm}
	\caption{Illustration of point cloud interpolation. The first 3 rows: intra-class interpolations. The last 3 rows: inter-class interpolations.\label{table:model_interpolation}}
	%\vspace{-4mm}
\end{table*}

A common method to demonstrate that the codewords have extracted the natural representations of the input is to see if the auto-encoder enables meaningful novel interpolations between two inputs in the dataset. In Table \ref{table:model_interpolation}, we show both inter-class and intra-class interpolations.
Note that we used a single AE for all shape categories for this task.

\subsection{Illustration of Point Cloud Clustering}

We also provide an illustration of clustering 3D point clouds using the codewords obtained from FoldingNet. We used the ShapeNet dataset to train the AE and obtain codewords for the ModelNet10 dataset, which we will explain in details in Section \ref{exp:transfer}. Then, we used T-SNE \cite{maaten2008visualizing} to obtain an embedding of the high-dimensional codewords in $\mathbb{R}^2$. The parameter ``perplexity'' in T-SNE was set as 50. We show the embedding result in Figure~\ref{fig:clustering}. From the figure, we see that most classes are easily separable except \{dresser (violet) v.s. nightstand (pink)\} and \{desk (red) v.s. table (yellow)\}. We have visually checked these two pairs of classes, and found that many pairs cannot be easily distinguished even by a human. In Table~\ref{table:confusion_set}, we list the most common mistakes made in classifying the ModelNet10 dataset.

\begin{figure}
	\centering
	\includegraphics[scale=0.34]{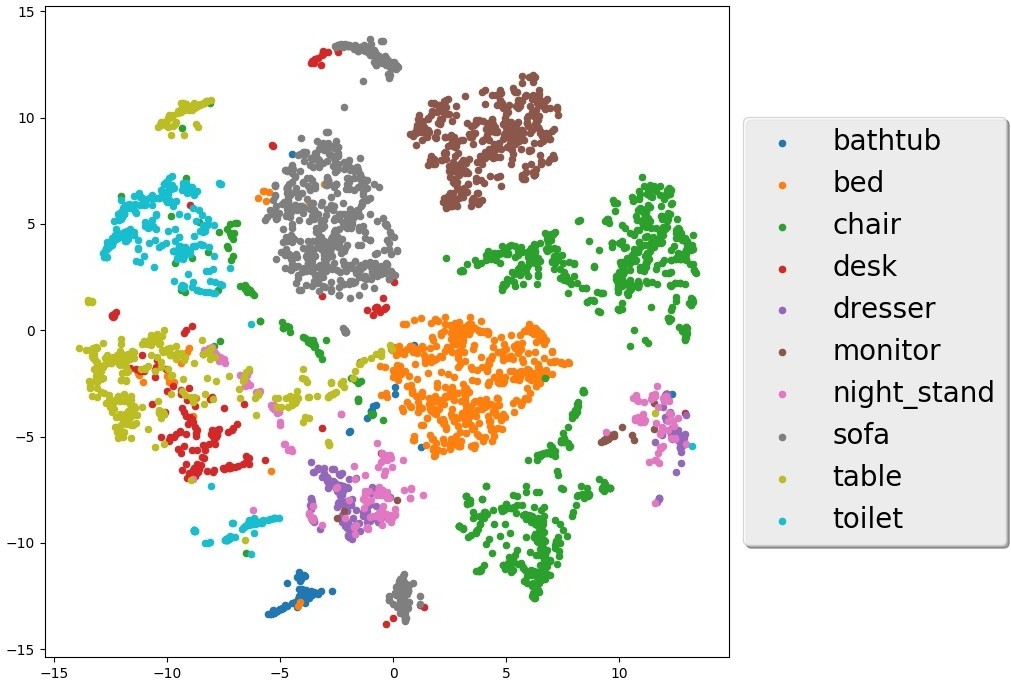}
	\caption{The T-SNE clustering visualization of the codewords obtained from FoldingNet auto-encoder. \label{fig:clustering}} \vspace{-1mm}
\end{figure}

\begin{table}[htb]
	%	\small
	\centering
	\begin{tabular}{c c c}
		\hline
		Item 1 & Item 2 & Number of mistakes\\
		\hline
		\hline
		dresser  & night stand   & 19 \\
		table    & desk          & 15 \\
		bed      & bath tub      & 3 \\
		night stand & table      & 3 \\
		\hline
	\end{tabular}\\[1ex]
	\caption{The first four types of mistakes made in the classification of ModelNet10 dataset. Their images are shown in the Supplementary Section~\ref{sec:sup:svm-modelnet10-details}. \label{table:confusion_set}} \vspace{-3mm}
\end{table}

\subsection{Transfer Classification Accuracy}\label{exp:transfer}

In this section, we show the efficiency of FoldingNet in representation learning and feature extraction from 3D point clouds. In particular, we follow the routine from \cite{wu2016learning,achlioptas2017representation} to train a linear SVM classifier on the ModelNet dataset \cite{wu20153d} using the codewords (latent representations) obtained from the auto-encoder, while training the auto-encoder from the ShapeNet dataset \cite{chang2015shapenet}. The train/test splits of the ModelNet dataset in our experiment is the same as in \cite{qi2016pointnet,wu2016learning}. The point-cloud-format of the ShapeNet dataset is obtained by sampling random points on the triangles from the mesh models in the dataset. It contains 57447 models from 55 categories of man-made objects. The ModelNet datasets are the same one used in \cite{qi2016pointnet}, and the MN40/MN10 datasets respectively contain 9843/3991 models for training and 2468/909 models for testing. Each point cloud in the selected datasets contains 2048 points with ($x$,$y$,$z$) positions normalized into a unit sphere as in \cite{qi2016pointnet}.

The codewords obtained from the FoldingNet auto-encoder is of length 512, which is the same as in \cite{achlioptas2017representation} and smaller than 7168 in \cite{wu20153d}. When training the auto-encoder, we used ADAM with an initial learning rate of 0.0001 and batch size of 1. We trained the auto-encoder for $1.6\times10^7$ iterations (i.e., 278 epochs) on the ShapeNet dataset. Similar to \cite{achlioptas2017representation,qi2016pointnet}, when training the AE, we applied random rotations to each point cloud. Unlike the random rotations in \cite{achlioptas2017representation,qi2016pointnet}, we applied the rotation that is one of the 24 axis-aligned rotations in the right-handed system. When training the linear SVM from the codewords obtained by the AE, we did not apply random rotations. We report our results in Table \ref{table:unsup_classification}. The results of \cite{kazhdan2003rotation,chen2003visual,girdhar2016learning,sharma2016vconv} are according to the report in \cite{wu2016learning,achlioptas2017representation}. Since the training of the AE and the training of the SVM are based on different datasets, the experiment shows the transfer robustness of the FoldingNet. We also include a figure (see Figure~\ref{fig:transfer40}) to show how the reconstruction loss decreases and the linear SVM classification accuracy increases during training. From Table \ref{table:unsup_classification}, we can see that FoldingNet outperforms all other methods on the MN40 dataset. On the MN10 dataset, the auto-encoder proposed in \cite{achlioptas2017representation} performs slightly better. However, the point-cloud format of the ModelNet10 dataset used in \cite{achlioptas2017representation} is not public, so the point-cloud sampling protocol of ours may be different from the one in \cite{achlioptas2017representation}. So it is inconclusive whether \cite{achlioptas2017representation} is better than ours on MN10 dataset.

\begin{figure}[t]
	\centering
	\includegraphics[scale=0.4]{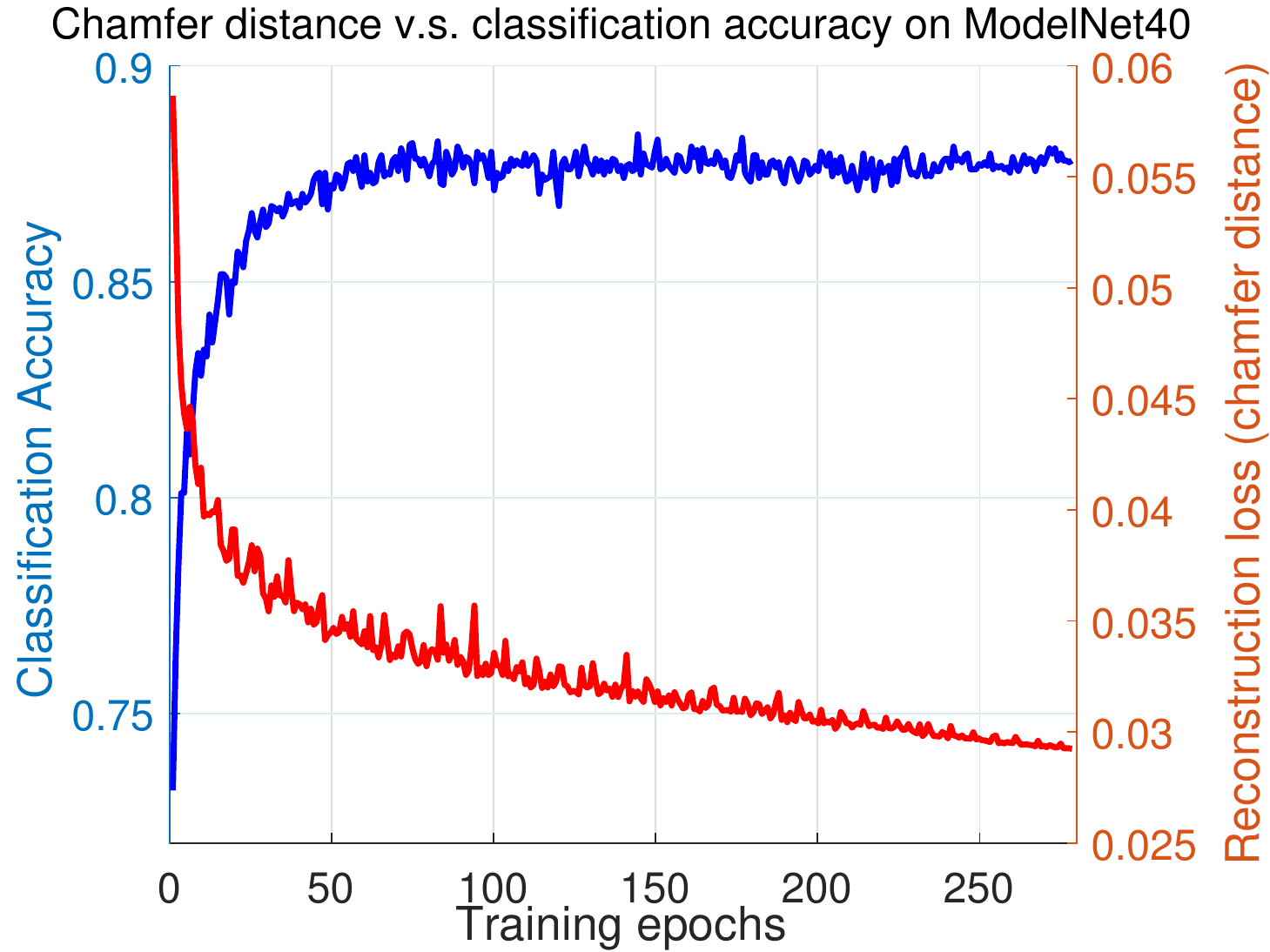}
	\caption{Linear SVM classification accuracy v.s. reconstruction loss on ModelNet40 dataset. The auto-encoder is trained using data from the ShapeNet dataset.\label{fig:transfer40}}\vspace{-2mm}
\end{figure}

\begin{table}[t]
%	\small
    \centering
    \begin{tabular}{c c c}
        \hline
        Method & MN40 & MN10\\
        \hline
        \hline
        SPH \cite{kazhdan2003rotation}         & 68.2\%       & 79.8\% \\
        LFD \cite{chen2003visual}            & 75.5\%       & 79.9\% \\
        T-L Network \cite{girdhar2016learning} & 74.4\%       & - \\
        VConv-DAE \cite{sharma2016vconv}    & 75.5\%       & 80.5\% \\
        3D-GAN \cite{wu2016learning}           & 83.3\%       & 91.0\% \\
        Latent-GAN \cite{achlioptas2017representation} & 85.7\% & {\bf95.3}\% \\
        FoldingNet (ours)         &  {\bf88.4}\%       & 94.4\% \\
        \hline
    \end{tabular}\vspace{1mm}
    \caption{The comparison on classification accuracy between FoldingNet and other unsupervised methods. All the methods train a linear SVM on the high-dimensional representations obtained from unsupervised training.\label{table:unsup_classification}}
    \vspace{-3mm}
\end{table}

\subsection{Semi-supervised Learning: What Happens when Labeled Data are Rare}

One of the main motivations to study unsupervised classification problems is that the number of labeled data is usually much smaller compared to the number of unlabeled data. In Section~\ref{exp:transfer}, the experiment is very close to this setting: the number of data in the ShapeNet dataset is large, which is more than $5.74\times 10^4$, while the number of data in the labeled ModelNet dataset is small, which is around $1.23\times 10^4$. Since obtaining human-labeled data is usually hard, we would like to test how the performance of FoldingNet degrades when the number of labeled data is small. We still used the ShapeNet dataset to train the FoldingNet auto-encoder. Then, we trained the linear SVM using only $a$\% of the overall training data in the ModelNet dataset, where $a$ can be 1, 2, 5, 7.5, 10, 15, and 20. The test data for the linear SVM are always all the data in the test data partition of the ModelNet dataset. If the codewords obtained by the auto-encoder are already linearly separable, the required number of labeled data to train a linear SVM should be small. To demonstrate this intuitive statement, we report the experiment results in Figure~\ref{fig:semi}. We can see that even if only 1\% of the labeled training data are available (98 labeled training data, which is about 1$\sim$3 labeled data per class), the test accuracy is still more than 55\%. When 20\% of the training data are available, the test classification accuracy is already close to 85\%, higher than most methods listed in Table \ref{table:unsup_classification}.

\begin{figure}[t]
	\centering
	\includegraphics[scale=0.5]{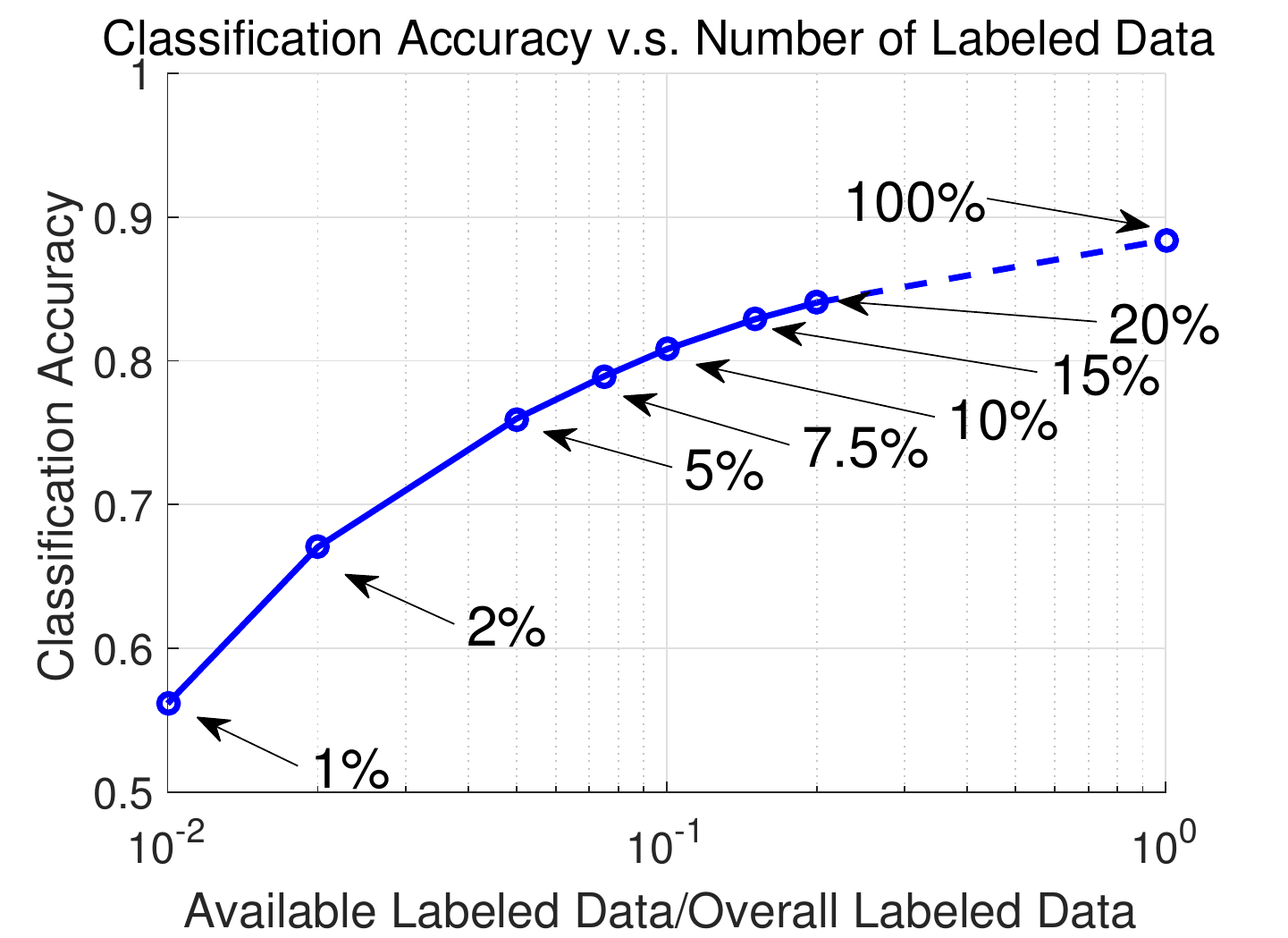}\\
	\caption{Linear SVM classification accuracy v.s. percentage of available labeled training data in ModelNet40 dataset. \label{fig:semi}} \vspace{-2mm}
\end{figure}

\subsection{Effectiveness of the Folding-Based Decoder}\label{exp:compare_FC}

\begin{figure}[t]
	\centering
	\includegraphics[scale=0.45]{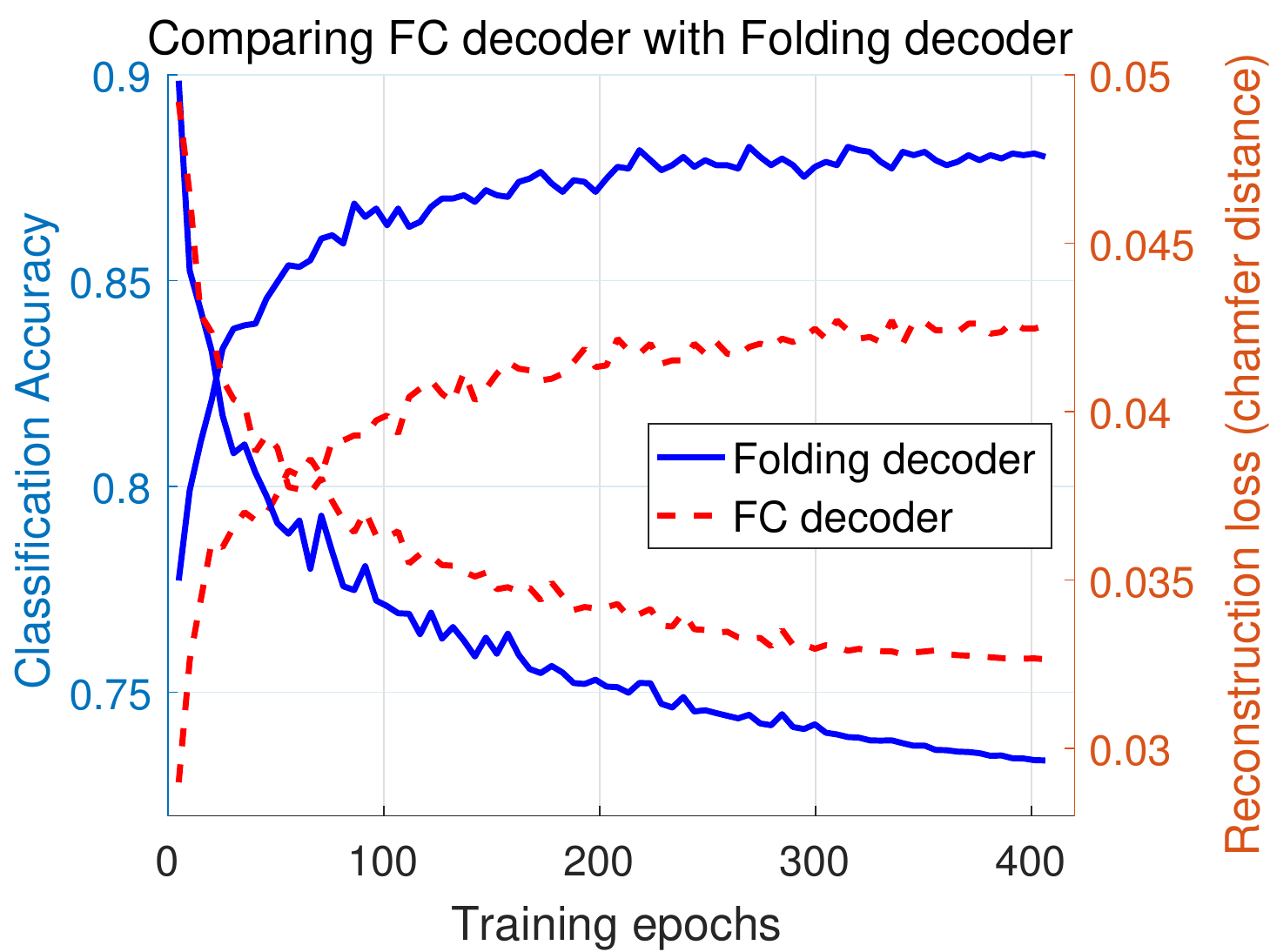}\\
	\caption{Comparison between the fully-connected (FC) decoder in \cite{achlioptas2017representation} and the folding decoder on ModelNet40. \label{fig:compare_FC}} \vspace{-2mm}
\end{figure}

In this section, we show that the folding-based decoder performs better in extracting features than the fully-connected decoder proposed in \cite{achlioptas2017representation} in terms of classification accuracy and reconstruction loss. We used the ModelNet40 dataset to train two deep auto-encoders. The first auto-encoder uses the folding-based decoder that has the same structure as in Section~\ref{sec:dec}, and the second auto-encoder uses a fully-connected three-layer perceptron as proposed in \cite{achlioptas2017representation}. For the fully-connected decoder, the number of inputs and number of outputs in the three layers are respectively \{512,1024\}, \{1024,2048\}, \{2048,2048$\times$3\}, which are the same as in \cite{achlioptas2017representation}. The output is a 2048-by-3 matrix that contains the three-dimensional points in the output point cloud. The encoders of the two auto-encoders are both the graph-based encoder mentioned in Section~\ref{sec:enc}. When training the AE, we used ADAM with an initial learning rate 0.0001, a batch size 1, for $4\times 10^6$ iterations (i.e., 406 epochs) on the ModelNet40 training dataset.

After training, we used the encoder to process all data in the ModelNet40 dataset to obtain a codeword for each point cloud. Then, similar to Section~\ref{exp:transfer}, we trained a linear SVM using these codewords and report the classification accuracy to see if the codewords are already linearly separable after encoding. The results are shown in Figure~\ref{fig:compare_FC}. During the training process, the reconstruction loss (measured in Chamfer distance) keeps decreasing, which means the reconstructed point cloud is more and more similar to the input point cloud. At the same time, the classification accuracy of the linear SVM trained on the codewords is increasing, which means the codeword representation becomes more linearly separable.

From the figure, we can see that the folding decoder almost always has a higher accuracy and lower reconstruction loss. Compared to the fully-connected decoder that relies on the unnatural ``1D order'' of the reconstructed 3D points in 3D space, the proposed decoder relies on the folding of an inherently 2D manifold corresponding to the point cloud inside the 3D space. As we mentioned earlier, this folding operation is more natural than the fully-connected decoder. Moreover, the number of parameters in the fully-connected decoder is $1.52\times 10^7$, while the number of parameters in our folding decoder is $1.05\times 10^6$, which is about 7\% of the fully-connected decoder.

One may wonder if uniformly random sampled 2D points on a plane can perform better than the 2D grid points in reconstructing point clouds. From our experiments, 2D grid points indeed provide reduced reconstruction loss than random points (Table~\ref{table:different_decoder} in Supplementary Section~\ref{sec:sup:decoders}). Notice that our graph-based max-pooling encoder can be viewed as a generalized version of the max-pooling neural network PointNet~\cite{qi2016pointnet}. The main difference is that the pooling operation in our encoder is done in a local neighborhood instead of globally (see Section~\ref{sec:enc}). In Supplementary Section~\ref{exp:comare_pointnet}, we show that the graph-based encoder architecture is better than an encoder architecture without the graph-pooling layers mentioned in Section~\ref{sec:enc} in terms of robustness towards random disturbance in point positions.

\section{Acknowledgment}
This work is supported by MERL. The authors would like to thank the helpful comments and suggestions from the anonymous reviewers, Teng-Yok Lee, Ziming Zhang, Zhiding Yu, Siheng Chen, Yuichi Taguchi, Mike Jones and Alan Sullivan.
%-------------------------------------------------

{\small
\bibliographystyle{ieee}
\bibliography{egbib}
}

\pagebreak

\section{Supplementary: {Proof of Theorem~\ref{thm:perm_inv}}}\label{sec:permutation}
Denote the input $n$-by-12 matrix by $\mathbf{L}$. Denote by $\boldsymbol{\theta}$ the codeword obtained by the encoder. Now we prove if the input is $\mathbf{PL}$ where $\mathbf{P}$ is an $n$-by-$n$ permutation matrix, the codeword obtained from the encoder is still $\boldsymbol{\theta}$.

The first part of the encoder is a per-point function, i.e., the 3-layer perceptron is applied to each row of the input matrix $\mathbf{L}$. Denote the function by $f_1$. Then, it is obvious that $f_1(\mathbf{PL})=\mathbf{P}f_1(\mathbf{L})$. The second part computes \eqref{eqn:graph_layer}. Now we prove that for \eqref{eqn:graph_layer},
\begin{equation}\label{eqn:perm0}
\mathbf{PY}=\mathbf{A}_\text{max}(\mathbf{PX})\mathbf{K}.
\end{equation}
Since $\mathbf{Y}=\mathbf{A}_\text{max}(\mathbf{X})\mathbf{K}$, we only need to prove
\begin{equation}
\mathbf{A}_\text{max}(\mathbf{PX})=\mathbf{P}\mathbf{A}_\text{max}(\mathbf{X}).
\end{equation}
Suppose the permutation operation $\mathbf{P}$ makes the $i$-th row of $\mathbf{PX}$ equal to $\mathbf{x}_{\pi(i)}$, where $\pi(\cdot)$ is a permutation function on the set of row indexes $\{1,2,\ldots,n\}$. Then, from \eqref{eqn:local_max}, the ($i$,$j$)-th entry of the matrix $\mathbf{A}_\text{max}(\mathbf{PX})$ is
\begin{equation}\label{eqn:perm1}
(\mathbf{A}_\text{max}(\mathbf{PX}))_{ij} = \text{ReLU}(\max_{k\in\mathcal{N}(\pi(i))}x_{kj}).
\end{equation}
In the meantime, the ($\pi(i)$,$j$)-th entry of $\mathbf{A}_\text{max}(\mathbf{PX})$ is
\begin{equation}\label{eqn:perm2}
(\mathbf{A}_\text{max}(\mathbf{X}))_{\pi(i)j}=\text{ReLU}(\max_{k\in\mathcal{N}(\pi(i))}x_{kj}).
\end{equation}
Since the right hand side of \eqref{eqn:perm1} and \eqref{eqn:perm2} are the same, we know that the matrix $\mathbf{A}_\text{max}(\mathbf{PX})$ can be obtained by changing the $i$-th row of $\mathbf{A}_\text{max}(\mathbf{X})$ to the $\pi(i)$-th row, which means $\mathbf{A}_\text{max}(\mathbf{PX})=\mathbf{P}\mathbf{A}_\text{max}(\mathbf{X})$.
Thus, we have proved that for the second part of the encoder, permuting the input rows is equivalent to permuting the output rows, i.e., \eqref{eqn:perm0} holds.

Therefore, if we permute the input to the encoder, the output of the graph layers also permute. Then, we apply global max-pooling to the output of the graph layers. It is obvious that the result remains the same if the rows of the input to the global max-pooling layer (or the output of the graph layers) permute. The conclusion of Theorem~\ref{thm:perm_inv} is hence proved.

\section{Supplementary: {Proof of Theorem~\ref{thm:1}}}\label{sec:proof}

We prove the existence-based Theorem~\ref{thm:1} by explicitly constructing a 2-layer perceptron and a codeword vector $\boldsymbol{\theta}$ that satisfy the stated properties.

The codeword is simply chosen as the vectorized form of the point cloud matrix $\mathbf{S}$. In particular, For a matrix $\mathbf{S}$ of size $m$-by-3, if $\mathbf{S}=[s_{jk}],j=1,2,\ldots m$ and $k=1,2,3$, the codeword vector $\boldsymbol{\theta}$ is chosen to be $\boldsymbol{\theta}=[s_{11},s_{12},s_{13},s_{21},s_{22},s_{23},\ldots,s_{m1},s_{m2},s_{m3}]$. Then, the $i$-th row after concatenation is $\mathbf{v}_i=[x_i,y_i,s_{11},s_{12},s_{13},s_{21},s_{22},s_{23},\ldots,s_{m1},s_{m2},s_{m3}]$, where $[x_i,y_i]$ is the position of the $i$-th 2D grid point. Suppose the 2D grid points have an interval $2\delta$, i.e., the distance between any two points in the 2D grid is at least $2\delta$. Further assume these $m$ grid points can all be written as $[x_i,y_i]=[(2\beta_i+1)\delta,(2\gamma_i+1)\delta]$, where $\beta_i$ and $\gamma_i$ are two integers whose absolute values are smaller than a positive constant $M$. One example of a set of 4-by-4 grid points is
\begin{equation}
\begin{matrix}
\{[-3\delta,-3\delta],&[-3\delta,-1\delta],&[-3\delta,1\delta],&[-3\delta,3\delta],\\
[-1\delta,-3\delta],&[-1\delta,-1\delta],&[-1\delta,1\delta],&[-1\delta,3\delta],\\
[1\delta,-3\delta], &[1\delta,-1\delta], &[1\delta,1\delta], &[1\delta,3\delta],\\
[3\delta,-3\delta], &[3\delta,-1\delta], &[3\delta,1\delta], &[3\delta,3\delta]\}.
\end{matrix}
\end{equation}
In this case, the choice of $M$ is 4. Also assume that the output point cloud is bounded inside 3-dimensional box of length 2 centered at the origin, i.e., $|s_{ij}|\le 1$.

Now, we construct a 2-layer perceptron $f$ that takes the rows $\mathbf{v}_i$ as inputs and provides the outputs $f(\mathbf{v}_i)=[s_{i1},s_{i2},s_{i3}]$, for $i=1,2,\ldots,m$. The input layer takes the vector intput $\mathbf{v}_i$ which has $3m+2$ scalars. The hidden layer has $3m$ neurons. The output layer provides three scalar outputs $[s_{i1},s_{i2},s_{i3}]$. The $3m$ neurons in the hidden layer are partitioned into $m$ groups of 3 neurons. The $k$-th neuron ($k=1,2,3$) in the $j$-th group ($j=1,2,3,\ldots,m$) is only connected to three inputs $x_i$, $y_i$ and $[s_{j,k}]$, and it computes a linear combination of $x_i$, $y_i$ and $s_{j,k}$ with weights
\begin{equation}
\begin{split}
\alpha_{j1}= u^2 x_j,\\
\alpha_{j2} = u y_j,\\
\alpha_{j3} = 1,\\
\end{split}
\end{equation}
and bias
\begin{equation}
b=-u^2x_j^2-uy_j^2
\end{equation}
where $u$ is a positive constant to be specified later. Suppose the linear combination output is $y_{j,k}$. The linear combination is followed by a nonlinear activation function\footnote{It is not hard to prove that this function can be obtained by concatenating ReLU functions with appropriate bias terms. We specifically avoid using the ReLU function in order not to hinder the main intuition. In all of our experiments, we use ReLU activation functions.} that computes the following
\begin{equation}\label{eqn:z}
z_{j,k}=\left\{\begin{matrix}
y_{j,k}, & \text{if } |y_{j,k}|<c,\\
0, & \text{if }|y_{j,k}|\ge c,
\end{matrix}\right.
\end{equation}
where $c$ is a constant to be specified later. The outputs of the activation functions are linearly combined to produce the final output. There are three neurons in the output layer. The $k$-th neuron ($k$=1,2,3) computes
\begin{equation}\label{eqn:w}
w_k = \sum_{j=1}^m z_{j,k}.
\end{equation}
We assume the parameters $(\delta,u,c,M)$ satisfy
\begin{align}
&u>0,c>0,\delta>0,M>0,\label{eqn:ass0}\\
&u\delta^2>c+1,\label{eqn:ass1}\\
&u>8M^2+4M+1,\label{eqn:ass2}\\
&c>1.\label{eqn:ass3}
\end{align}

Now we prove that for this perceptron, the final output $[w_1,w_2,w_3]$ is indeed $[s_{i1},s_{i2},s_{i3}]$ when the input to the perceptron is $\mathbf{v}_i$. For the $i$-th input $\mathbf{v}_i=[x_i,y_i,s_{11},s_{12},s_{13},s_{21},s_{22},s_{23},\ldots,s_{m1},s_{m2},s_{m3}]$, the $k$-th neuron in the $j$-th group in the hidden layer computes the following linear combination
\begin{equation}
\begin{split}
y_{j,k} = &\alpha_{j1} x_i + \alpha_{j2} y_i + \alpha_{j3} s_{j,k}+b\\
= &u^2 x_jx_i+u y_jy_i + s_{j,k}-u^2x_j^2-uy_j^2\\
= &u^2 x_j(x_i-x_j) +uy_j(y_i-y_j) + s_{j,k}.
\end{split}
\end{equation}
Notice that we have assumed
$[x_i,y_i]=[(2\beta_i+1)\delta,(2\gamma_i+1)\delta],\forall i$. So we have
\begin{equation}
\begin{split}
&y_{j,k}= u^2 x_j(x_i-x_j) +uy_j(y_i-y_j) + s_{j,k}\\
=&2u^2\delta^2(2\beta_j+1)(\beta_i-\beta_j)+2u\delta^2(2\gamma_j+1)(\gamma_i-\gamma_j) + s_{j,k}\\
=&u^2\delta^2 m_1 + u\delta^2 m_2+s_{j,k},
\end{split}
\end{equation}
where the two integer constants $m_1=2(2\beta_j+1)(\beta_i-\beta_j)$ and $m_2=2(2\gamma_j+1)(\gamma_i-\gamma_j)$, and $m_1=0$ only if $x_i=x_j$ and $m_2=0$ only if $y_i=y_j$. Since the absolute values of $\beta_i$, $\beta_j$, $\gamma_i$ and $\gamma_j$ are all smaller than $M$, we have
\begin{equation}
|m_1|\le 2|2\beta_j+1|\cdot|\beta_i-\beta_j|<2(2M+1)\cdot 2M = 8M^2+4M.
\end{equation}
Similarly, we have
\begin{equation}
|m_2|\le 2|2\gamma_j+1|\cdot|\gamma_i-\gamma_j|<2(2M+1)\cdot 2M = 8M^2+4M.
\end{equation}
Now we consider 3 possible cases:
\begin{itemize}
\item $|m_1|\ge 1$: In this case,
\begin{equation}
\begin{split}
|y_{j,k}|=& |u^2\delta^2 m_1 + u\delta^2 m_2+s_{j,k}|\\
>&u^2\delta^2 |m_1| - u\delta^2 |m_2|-|s_{j,k}|\\
>&u^2\delta^2 -u\delta^2(8M^2+4M)-1\\
=&u\delta^2 [u-(8M^2+4M)]-1\\
\overset{(a)}{>}&(c+1)\cdot 1 -1=c,
\end{split}
\end{equation}
where step (a) follows from the assumption \eqref{eqn:ass1}.
\item $m_1=0$ but $|m_2|\ge 1$: In this case,
\begin{equation}
\begin{split}
|y_{j,k}|=&|u\delta^2m_2+s_{j,k}|\\
\ge&u\delta^2|m_2|-|s_{j,k}|\\
\ge &u\delta^2\overset{(a)}{\ge}c+1>c,
\end{split}
\end{equation}
where step (a) follows from assumption \eqref{eqn:ass2}.

\item $m_1=m_2=0$. In this case,
\begin{equation}
|y_{j,k}|=|s_{j,k}|\le 1\overset{(a)}{<}c,
\end{equation}
where step (a) follows from assumption \eqref{eqn:ass3}.
\end{itemize}
Notice that the first two cases are equivalent to $i\neq j$ and the last case is equivalent to $i=j$. Thus, from \eqref{eqn:z}, we have
\begin{equation}
z_{j,k}=
\left\{\begin{matrix}
s_{j,k}, & \text{if } j=i,\\
0, & \text{if } j\neq i.
\end{matrix}\right.
\end{equation}
Thus, from \eqref{eqn:w}, the final output is
\begin{equation}
w_k = \sum_{j=1}^m z_{j,k}=s_{i,k},k=1,2,3,
\end{equation}
which means the output is indeed $[s_{i,1},s_{i,2},s_{i,3}]$ when the input is $\mathbf{v}_i$. This concludes the proof.

\section{Supplementary: Decoder Variations \label{sec:sup:decoders}}

\begin{table}[b]
	%	\small
	\centering
	\begin{tabular}{l|c|c|c}
		\hline
		Grid Setting      & \#Folds& Test Cls. Acc. & Test Loss\\
		\hline
		regular 2D & 2      & 88.25\% & 0.0296 \\
		\hline
		regular 2D & 3      & 88.41\% & 0.0290 \\
		\hline
		regular 1D & 2      & 86.71\% & 0.0355\\
		\hline
		regular 3D & 2      & 88.41\% & 0.0284 \\
		\hline
		uniform 2D & 2      & 87.12\% & 0.0321 \\
		\hline
	\end{tabular}\vspace{1mm}
	\caption{
		Comparison between different FoldingNet decoders.
		``Uniform'': the grid is uniformly random sampled.
		``Regular'': the grid is regularly sampled with fixed spacings. }
	\label{table:different_decoder}
\end{table}

The current decoder design has two consecutive folding operations that apply on a 2D grid. Therefore, one may wonder if the performance of FoldingNet can be improved if we utilize (1) more folding operations or (2) the same number of folding operations on regular grids of different dimensions. In this section, we report the results for these different settings. The experimental settings are the same with Section~\ref{exp:compare_FC}. The experiment results are shown in Table \ref{table:different_decoder}. As one can see from line 1 and line 2, increasing the number of folding operations does not significantly increase the performance. Comparing line 1 and line 3, one can see that a 2D grid is better than a 1D grid for both classification and reconstruction. From line 1 and line 4, one can see that a 3D grid only brings a marginal improvement. As we discussed in the introduction, this is because the intrinsic dimensionality of data in the ShapeNet and ModelNet datasets is 2, as they are sampled from object surfaces. If point clouds are intrinsically volumetric, we believe using a 3D grid in the decoder is more suitable. In addition, we also tried to generate the fixed grid by uniformly random sampling in the square. However, it leads to slightly worse results. We believe it is caused by the local density variation introduced by the random sampling.

\section{Supplementary: Folding by Deconvolution}\label{sec:devonvolution}
The folding operation in Definition~\ref{def:folding} is essentially a per-point 2D-to-3D function from a 2D grid to a 3D surface. A natural question to ask is whether introducing explicit correlations in the functions imposed on neighboring grid points can help improve the performance. We noted that there is a closely related work on reconstructing 3D point sets using side information from images \cite{fan2016point}. The point reconstruction network in \cite{fan2016point} uses deconvolution to fuse information on the regular grid structure imposed by the image, which is similar to the idea above. Here, we compare a deconvolution network with FoldingNet on the reconstruction performance. The feature sizes of the deconvolution network (C$\times$H$\times$W) are 512$\times$1$\times$1$\to$256$\times$3$\times$3$\to$128$\times$5$\times$5$\to$64$\times$15$\times$15$\to$3$\times$45$\times$45 with kernel sizes 3, 3, 5, 5. The comparison is shown in Table~\ref{table:deconv}. We conjecture that deconvolution goes beyond point-wise operations, thus imposes a stronger constraint on the smoothness of the reconstructed surface. Thus, its reconstruction is worse (although with comparable classification accuracy). On the other hand, the use of grids with point-wise MLP in FoldingNet only impose an implicit constraint, thus leading to better reconstructions.
\begin{table}[htb]
%	\small
    \centering
    \begin{tabular}{c c c c}
        \hline
         & Cl. Acc. & Tst. Loss & \# Params.\\
        \hline
%        \hline
        FoldingNet & 88.41\% & 0.0296 & 1.0$\times$10$^6$\\
        Deconv & 88.86\%  & 0.0319 & 1.7$\times$10$^6$\\
        \hline
    \end{tabular} \vspace{1mm}
	\caption{Comparison of two different implementations of the folding operation.\label{table:deconv}}
\end{table}

\section{Supplementary: Robustness of the graph-based encoder}\label{exp:comare_pointnet}

Here, we use one experiment to show that the graph-pooling layers are useful in maintaining the good performance of the FoldingNet when the data is subject to random noise. The following experiment compares FoldingNet with a deep auto-encoder that has the same folding-based decoder architecture but a different encoder architecture in which the graph-based max-pooling layers are removed. The setting of the experiment is the same as in Section~\ref{exp:compare_FC} except that 5 percents of the points in each point cloud in the ModelNet40 dataset are randomly shifted to other positions (but still within the bounding box of the original point cloud). We use this noisy data to see how the performances degrade for the graph-based encoder and the encoder without graph-based max-pooling layers. The results are reported in Figure~\ref{fig:compare_pointnet}. We can see that when the graph-based max-pooling layers are removed, the performance degrades by approximately 2 percents when noise is injected into the dataset. However, the classification accuracy of FoldingNet does not change much (when compared with Figure~\ref{fig:compare_FC} in Section~\ref{exp:compare_FC}). Thus, it can be seen that the graph-based encoder can make FoldingNet more robust.

\begin{figure}
	\centering
	\includegraphics[scale=0.55]{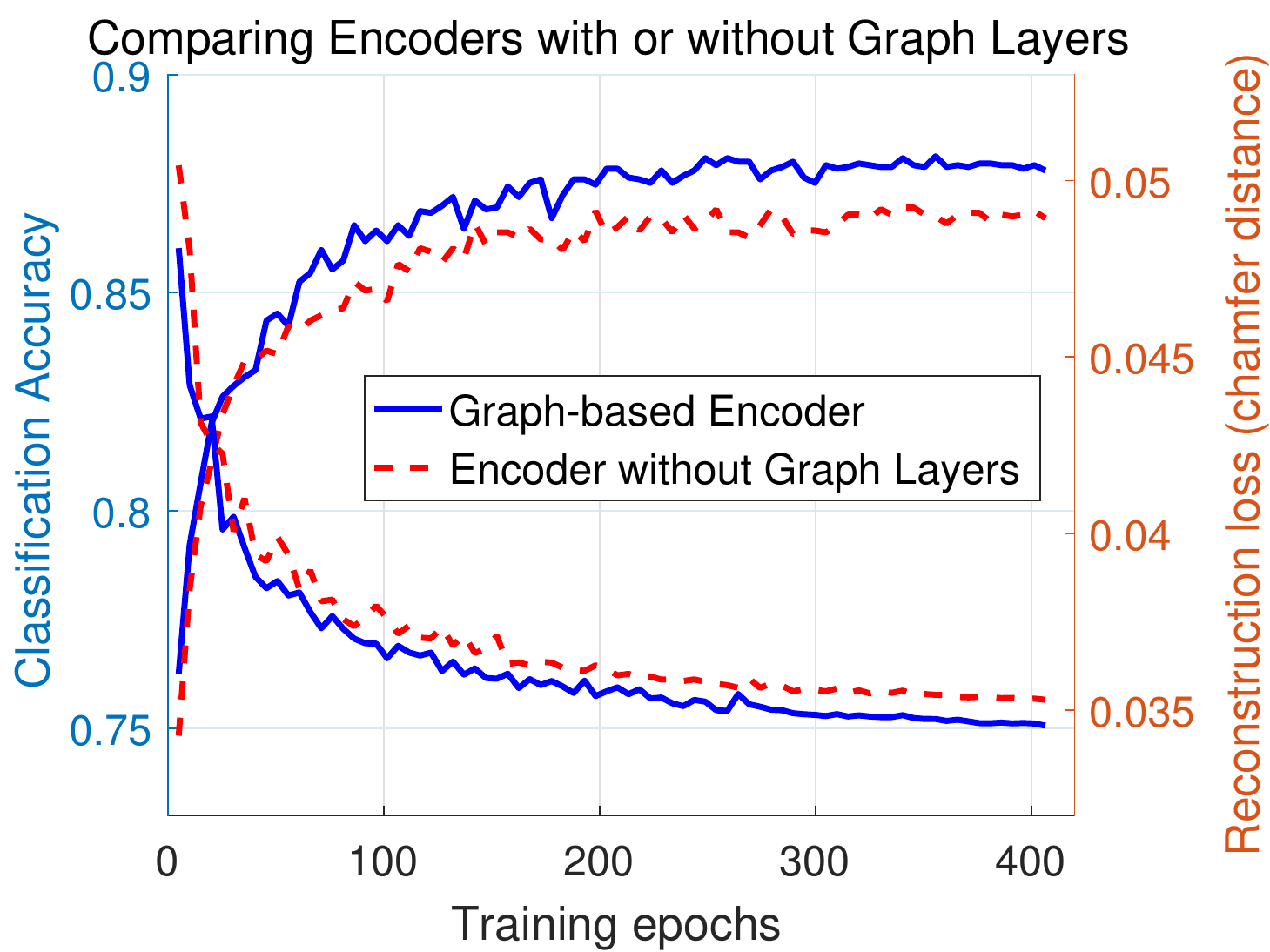}\\
	\caption{Comparison between the graph-based encoder in Section~\ref{sec:enc} and the encoder from which the graph-based max-pooling layers are removed. The encoder with no graph-based layers is similar to the one proposed in \cite{qi2016pointnet} which is for a different goal (supervised learning).}\vspace{-0mm} \label{fig:compare_pointnet}
\end{figure}

\section{Supplementary: More Details on the Linear SVM Experiment on ModelNet10 \label{sec:sup:svm-modelnet10-details}}

The classification accuracy obtained in Section~\ref{exp:transfer} on MN10 dataset is 94.4\%. We stated in Section 4.5 that many pairs which are wrongly classified are actually hard to distinguish even by a human. In the table on the next page, we list all the incorrectly classified models and their point cloud representations. A phrase like ``table $\to$ desk'' means the point cloud has label ``table'' but it is wrongly classified as ``desk'' by the linear SVM.

\begin{table*}
\begin{tabularx}{\textwidth}{XXX}
%\hline
\hline\\[-2ex]
\includegraphics[height=0.070000\textwidth]{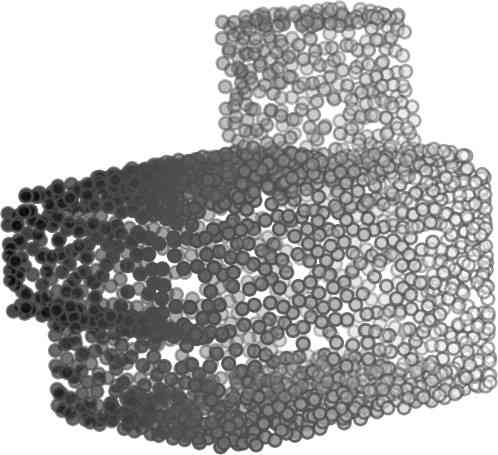}toilet $\to$ bed&
\includegraphics[height=0.070000\textwidth]{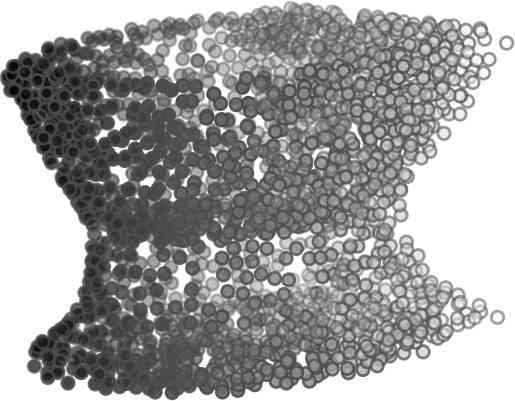}toilet $\to$ bathtub&
\includegraphics[height=0.070000\textwidth]{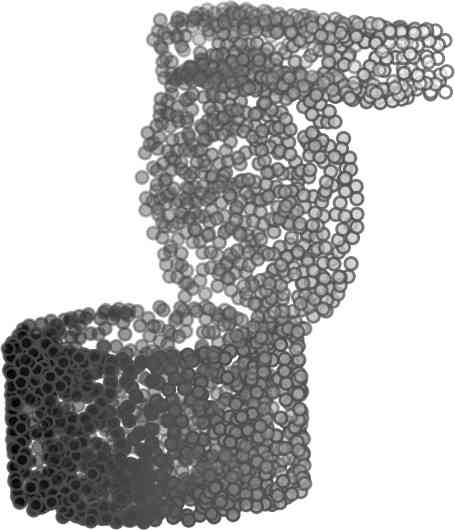}toilet $\to$ chair\\
%\hline
\includegraphics[height=0.070000\textwidth]{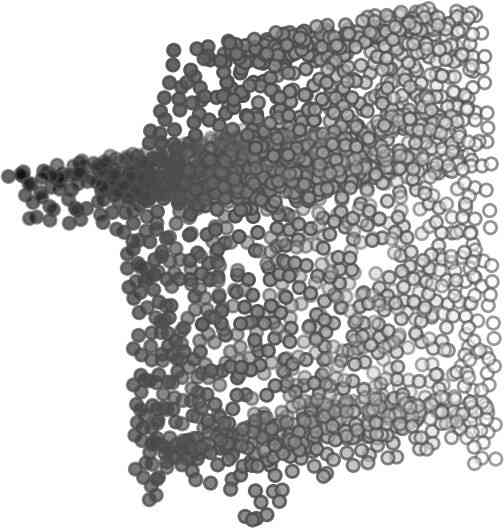}dresser $\to$ night\_stand&
\includegraphics[height=0.070000\textwidth]{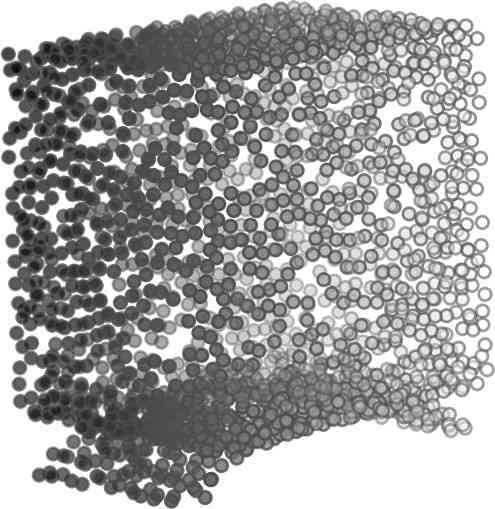}dresser $\to$ night\_stand&
\includegraphics[height=0.070000\textwidth]{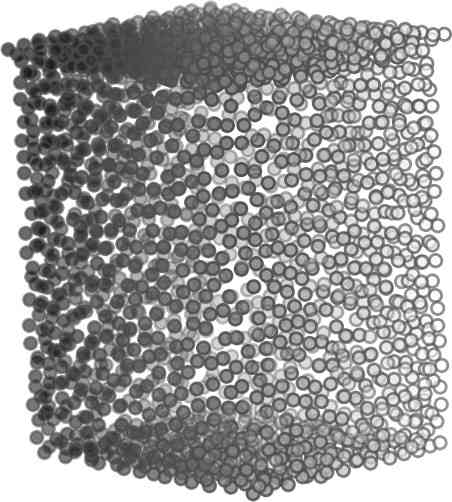}dresser $\to$ night\_stand\\
%\hline
\includegraphics[height=0.070000\textwidth]{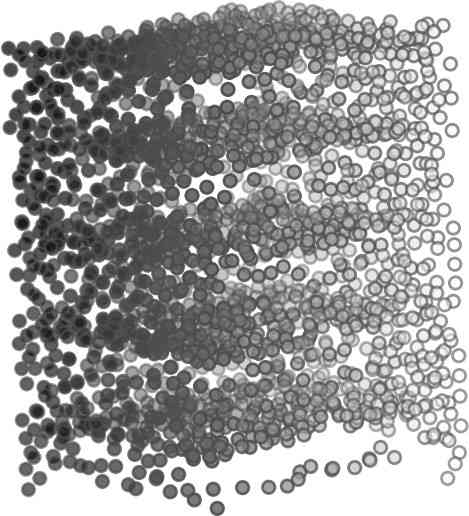}dresser $\to$ night\_stand&
\includegraphics[height=0.070000\textwidth]{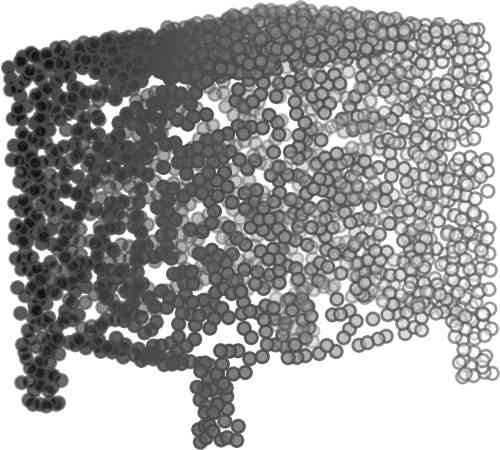}dresser $\to$ night\_stand&
\includegraphics[height=0.070000\textwidth]{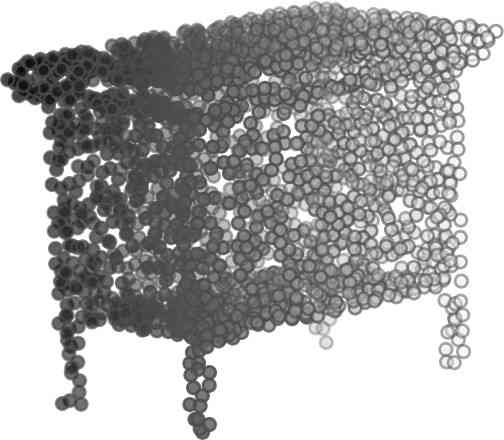}dresser $\to$ night\_stand\\
%\hline
\includegraphics[height=0.070000\textwidth]{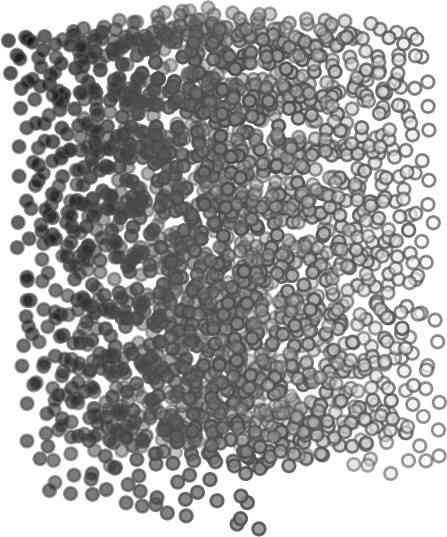}dresser $\to$ night\_stand&
\includegraphics[height=0.070000\textwidth]{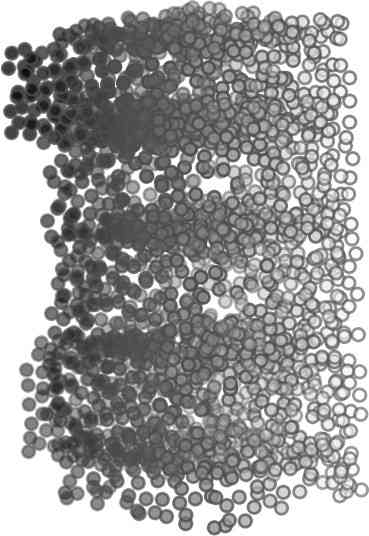}dresser $\to$ night\_stand&
\includegraphics[height=0.070000\textwidth]{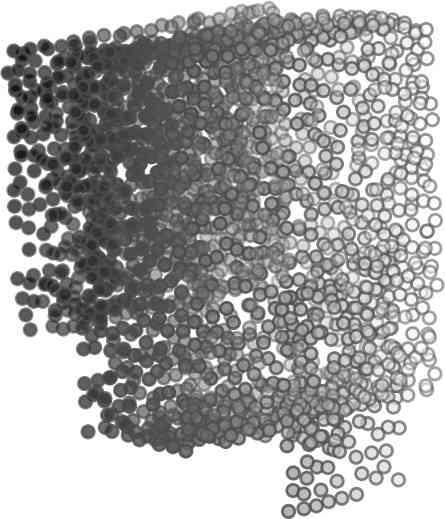}monitor $\to$ dresser\\
%\hline
\includegraphics[height=0.070000\textwidth]{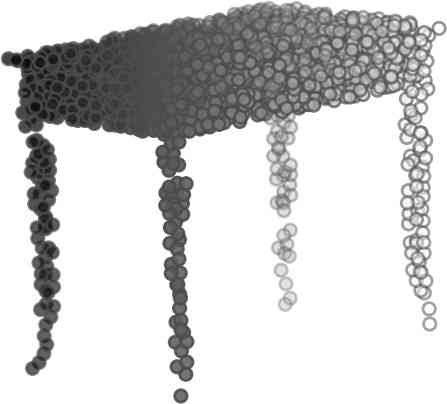}desk $\to$ table&
\includegraphics[height=0.070000\textwidth]{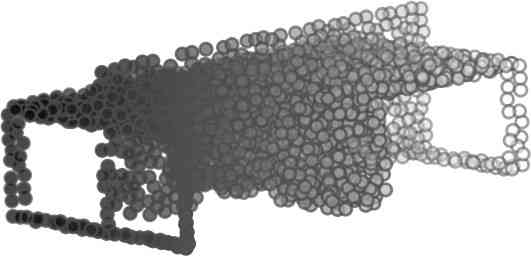}desk $\to$ table&
\includegraphics[height=0.070000\textwidth]{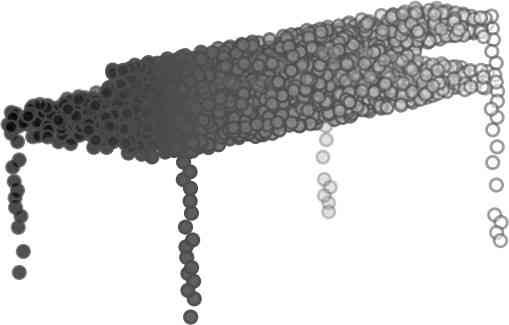}desk $\to$ sofa\\
%\hline
\includegraphics[height=0.070000\textwidth]{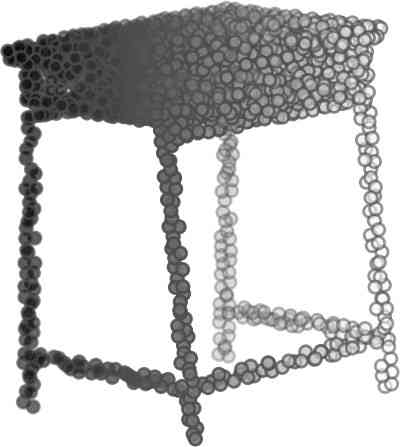}desk $\to$ night\_stand&
\includegraphics[height=0.070000\textwidth]{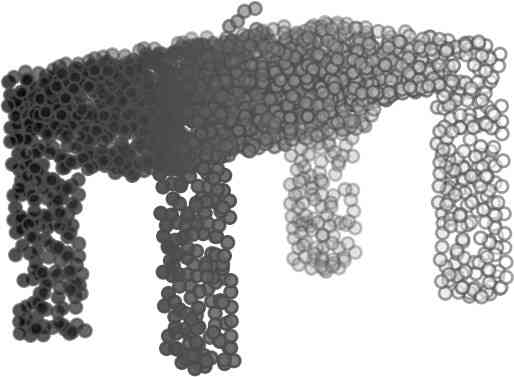}desk $\to$ table&
\includegraphics[height=0.070000\textwidth]{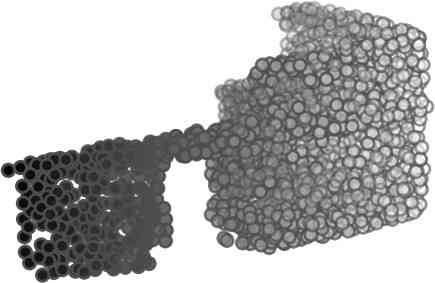}desk $\to$ sofa\\
%\hline
\includegraphics[height=0.070000\textwidth]{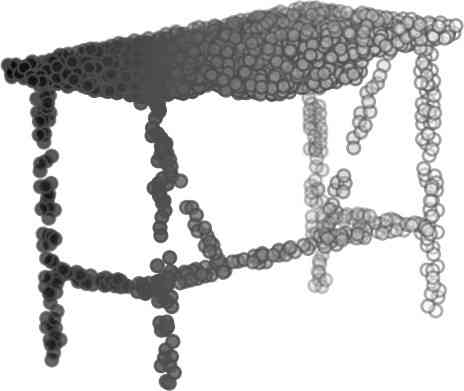}desk $\to$ table&
\includegraphics[height=0.070000\textwidth]{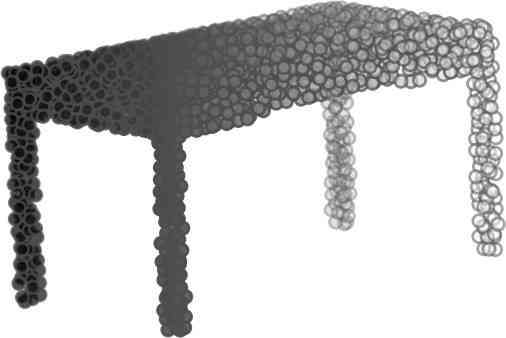}desk $\to$ table&
\includegraphics[height=0.050000\textwidth]{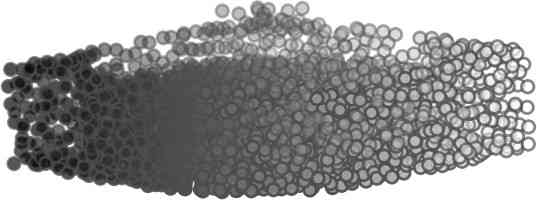}bathtub $\to$ bed\\
%\hline
\includegraphics[height=0.050000\textwidth]{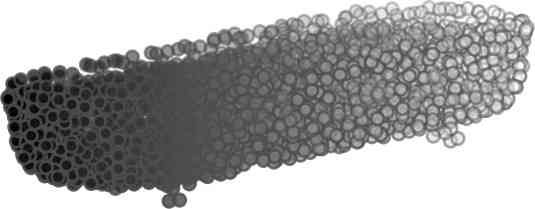}bathtub $\to$ table&
\includegraphics[height=0.050000\textwidth]{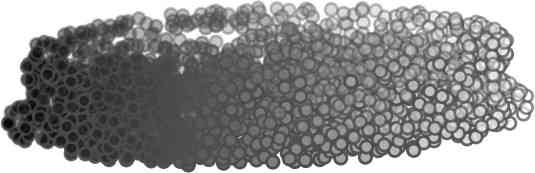}bathtub $\to$ bed&
\includegraphics[height=0.070000\textwidth]{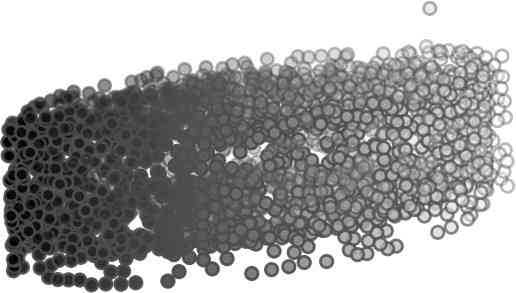}bathtub $\to$ table\\
%\hline
\includegraphics[height=0.050000\textwidth]{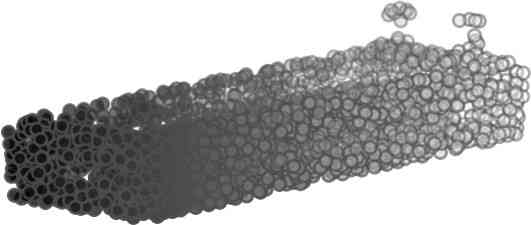}bathtub $\to$ bed&
\includegraphics[height=0.070000\textwidth]{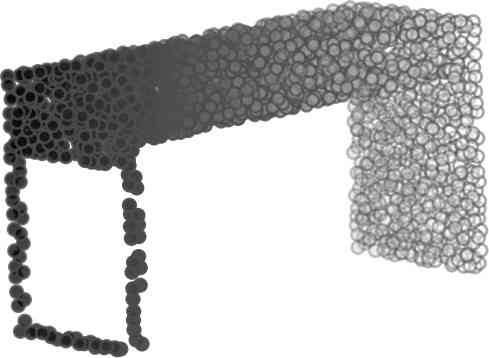}table $\to$ desk&
\includegraphics[height=0.070000\textwidth]{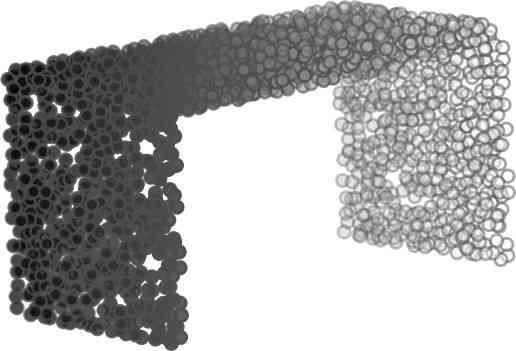}table $\to$ desk\\
%\hline
\includegraphics[height=0.070000\textwidth]{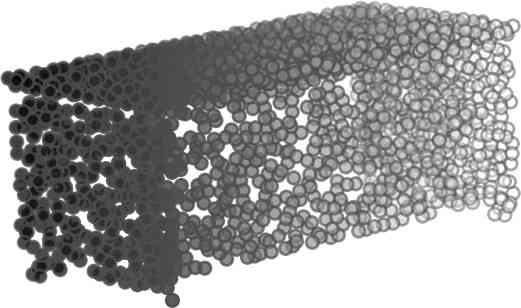}table $\to$ desk&
\includegraphics[height=0.070000\textwidth]{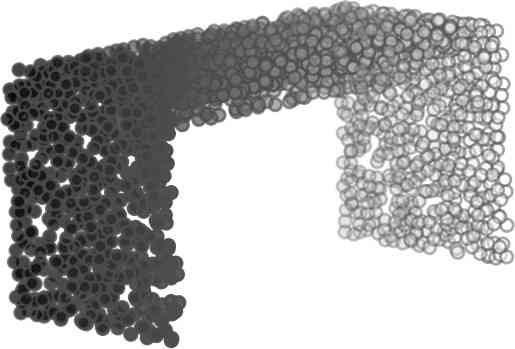}table $\to$ desk&
\includegraphics[height=0.070000\textwidth]{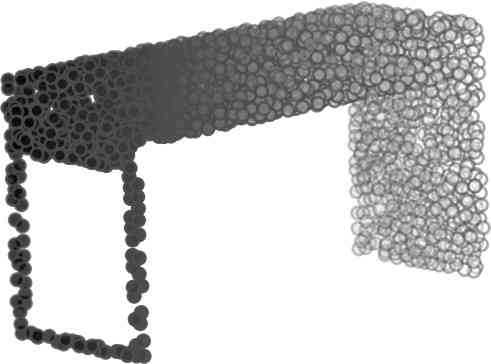}table $\to$ desk\\
%\hline
\includegraphics[height=0.070000\textwidth]{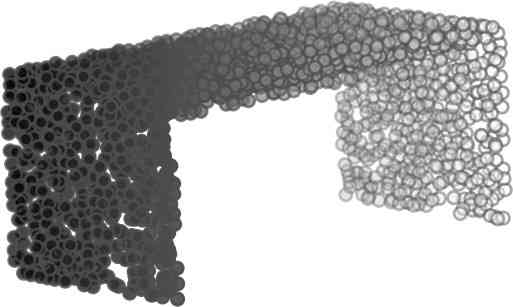}table $\to$ desk&
\includegraphics[height=0.070000\textwidth]{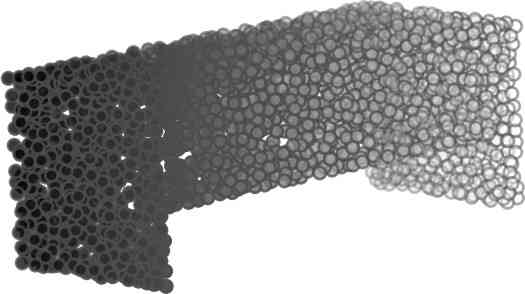}table $\to$ desk&
\includegraphics[height=0.070000\textwidth]{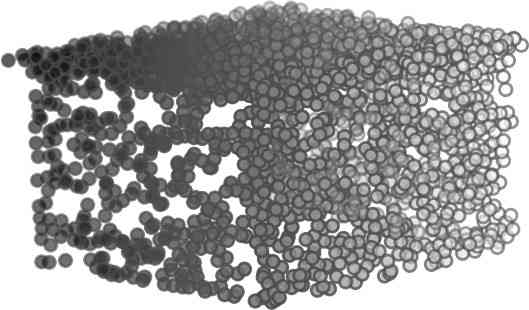}table $\to$ desk\\
%\hline
\includegraphics[height=0.070000\textwidth]{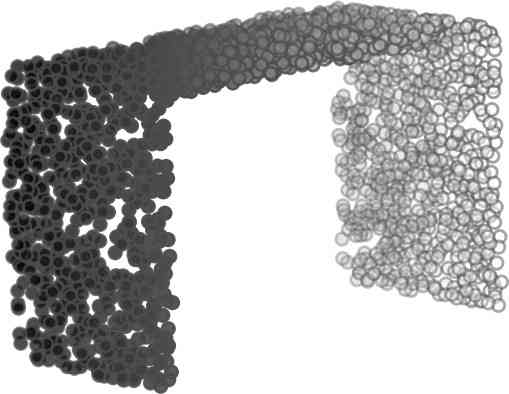}table $\to$ desk&
\includegraphics[height=0.070000\textwidth]{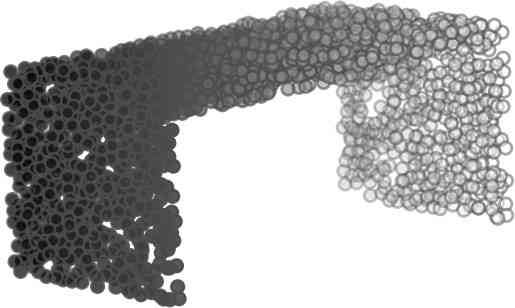}table $\to$ desk&
\includegraphics[height=0.070000\textwidth]{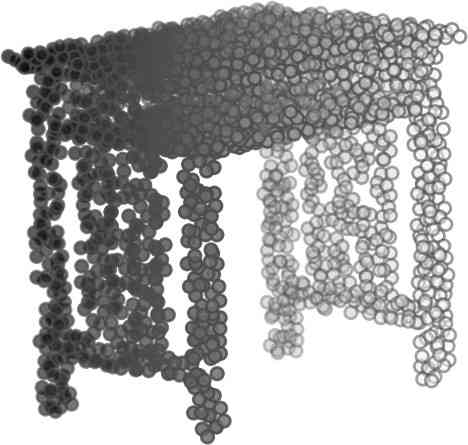}table $\to$ night\_stand\\
%\hline
\includegraphics[height=0.070000\textwidth]{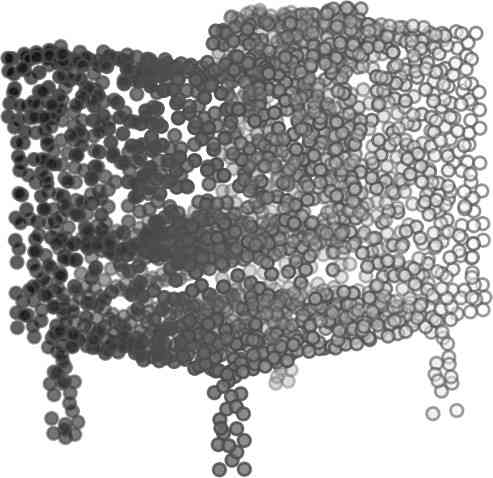}sofa $\to$ night\_stand&
\includegraphics[height=0.070000\textwidth]{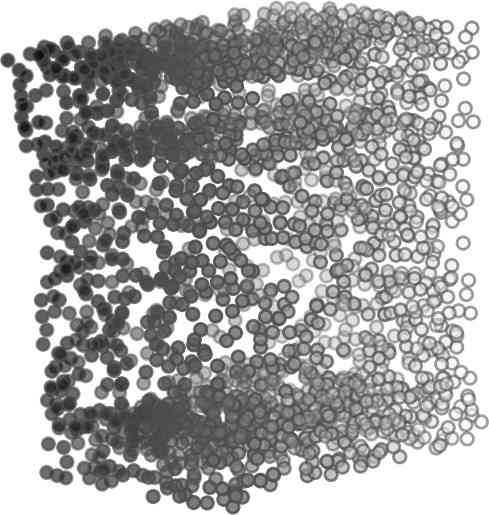}night\_stand $\to$ dresser&
\includegraphics[height=0.070000\textwidth]{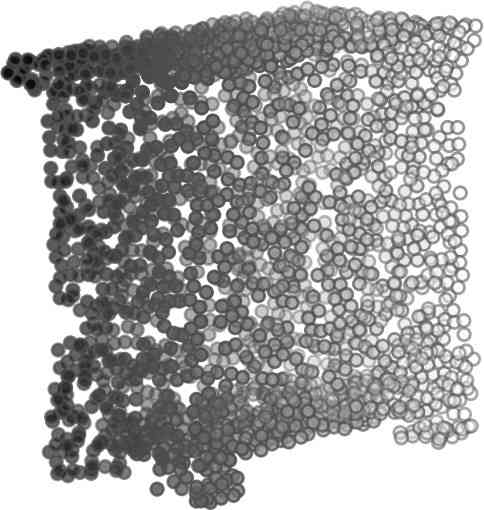}night\_stand $\to$ dresser\\
%\hline
\includegraphics[height=0.070000\textwidth]{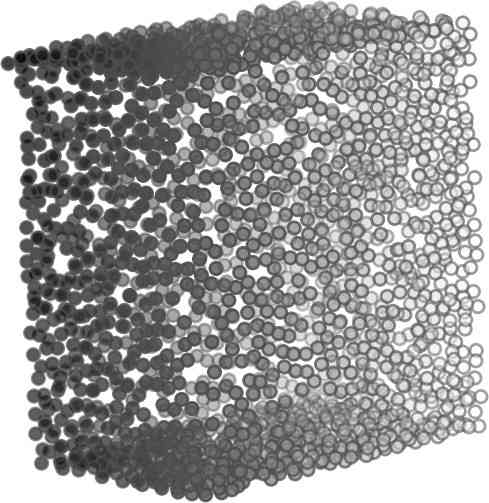}night\_stand $\to$ dresser&
\includegraphics[height=0.070000\textwidth]{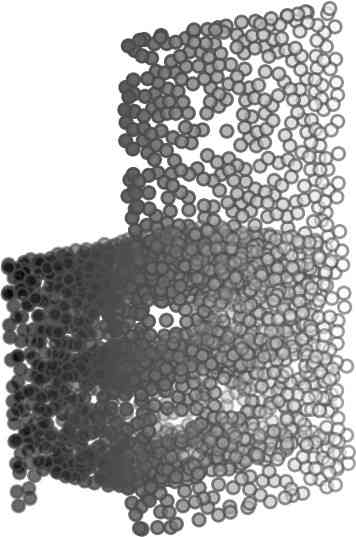}night\_stand $\to$ dresser&
\includegraphics[height=0.070000\textwidth]{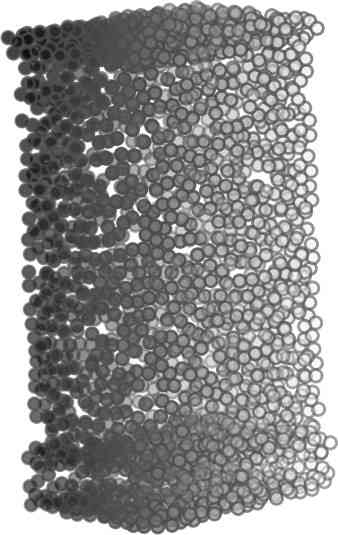}night\_stand $\to$ dresser\\
%\hline
\includegraphics[height=0.070000\textwidth]{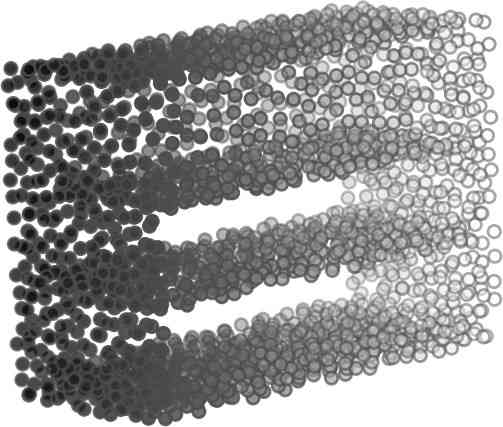}night\_stand $\to$ dresser&
\includegraphics[height=0.070000\textwidth]{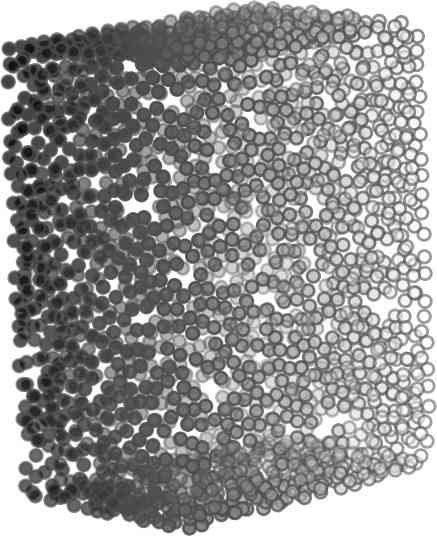}night\_stand $\to$ dresser&
\includegraphics[height=0.070000\textwidth]{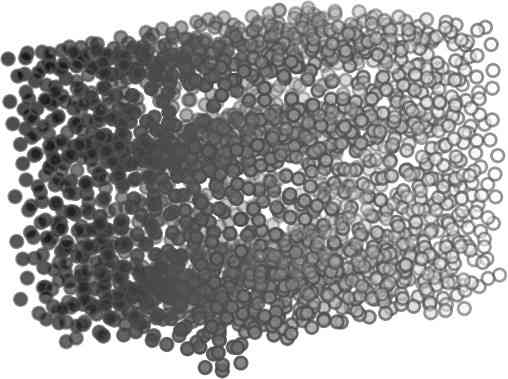}night\_stand $\to$ dresser\\
%\hline
\includegraphics[height=0.070000\textwidth]{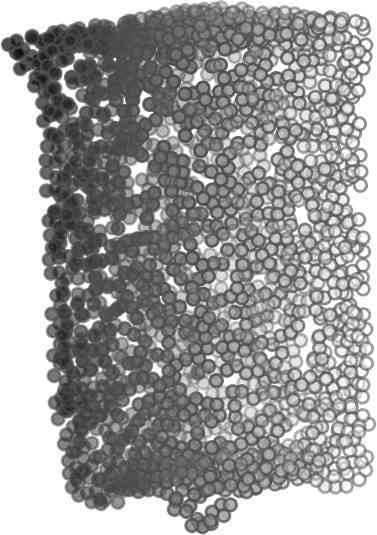}night\_stand $\to$ dresser&
\includegraphics[height=0.070000\textwidth]{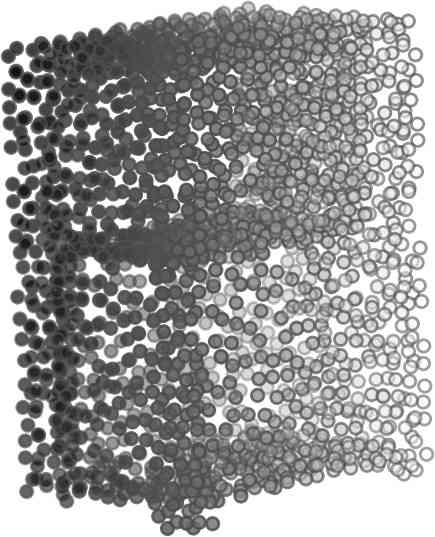}night\_stand $\to$ dresser&
\includegraphics[height=0.070000\textwidth]{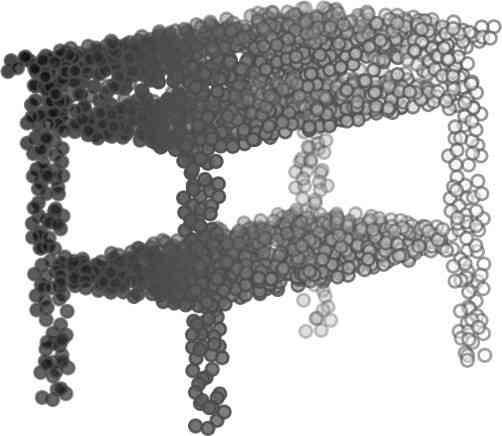}night\_stand $\to$ table\\
%\hline
\includegraphics[height=0.070000\textwidth]{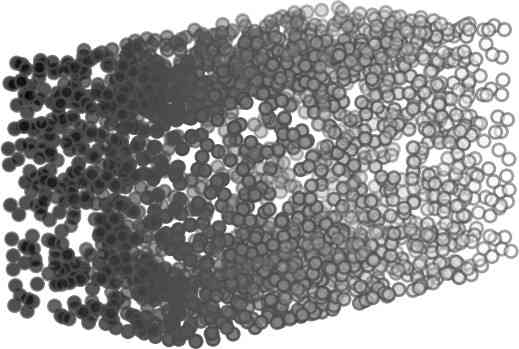}night\_stand $\to$ dresser&
\includegraphics[height=0.070000\textwidth]{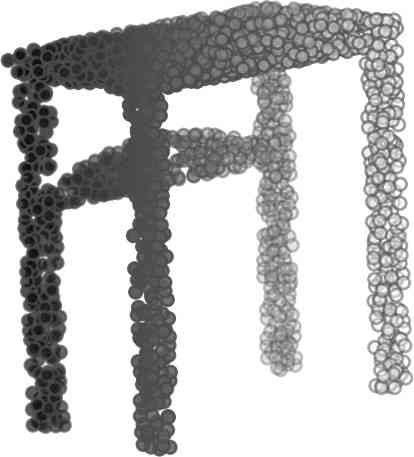}night\_stand $\to$ table&
\includegraphics[height=0.070000\textwidth]{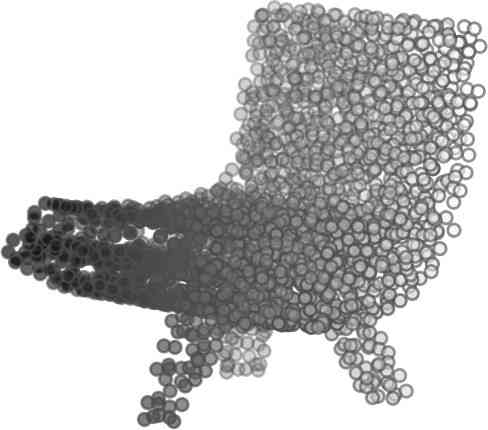}chair $\to$ bed\\
%\hline
\hline
\end{tabularx}
\end{table*}

\end{document}